\documentclass[letterpaper, 10 pt, journal, twoside]{IEEEtran}
\usepackage{graphicx}
\usepackage{multicol}
\usepackage{booktabs}
\usepackage{amsmath,amssymb}
\usepackage{amsthm}
\usepackage[utf8]{inputenc}
\usepackage[english]{babel}
\usepackage{multirow}
\usepackage[font=small]{caption}
\captionsetup[figure]{name={Fig.},labelsep=period}
\usepackage{float}

\usepackage{lettrine}
\usepackage[]{algorithm2e}
\usepackage{algpseudocode}
\usepackage{cite}
\usepackage{filecontents}
\usepackage{lipsum}
\usepackage{color}
\usepackage{esdiff}
\usepackage{epstopdf}
\usepackage[normalem]{ulem}
\usepackage{soul}

\newcommand{\tabincell}[2]{\begin{tabular}{@{}#1@{}}#2\end{tabular}}
\usepackage{subcaption}
\usepackage{xcolor}
\usepackage{colortbl}
\usepackage{balance}
\vfuzz2pt 
\newtheorem{thm}{Theorem}

\newtheorem{lem}{Lemma}[section]

\newtheorem{defn}{Definition}[section]

\newtheorem{rem}{Remark}

\definecolor{pink}{rgb}{1, 0, 1}
\definecolor{orange}{rgb}{1, 0.7529, 0}
\definecolor{darkgreen}{rgb}{0, 0.8, 0}
\hyphenation{op-tical net-works semi-conduc-tor}

\begin{document}

\title{
{\small This paper has been accepted for publication by IEEE Robotics and Automation Letters}\\
SMART: Self-Morphing Adaptive Replanning Tree}

\author{ \begin{tabular}{cccccccccc}
{Zongyuan Shen} & {James P.  Wilson} & {Shalabh Gupta$^\star$} & {Ryan Harvey}  
\end{tabular}\vspace{-21pt}

\thanks{This work was supported by Air Force Research Laboratory. Distribution A. Approved for public release: distribution unlimited. (AFRL-2023-1595) Date
Approved 04-06-2023. \textit{(Corresponding author:
Shalabh Gupta.)}
}

\thanks {The authors are with the School of Engineering, Department of Electrical
and Computer Engineering, University of Connecticut, Storrs, CT 06269
USA (e-mail: zongyuan.shen@uconn.edu; james.wilson@uconn.edu; shalabh.gupta@uconn.edu; ryan.harvey@uconn.edu).}

\thanks{Digital Object Identifier (DOI): 10.1109/LRA.2023.3315210.}

\thanks{The code is available at https://github.com/ZongyuanShen/SMART. Supplementary videos are available at https://www.youtube.com/@linkslabrobotics.}

\thanks{©2023 IEEE. Personal use of this material is permitted. Permission from IEEE must be obtained for all other uses, in any current or future media, including reprinting/republishing this material for advertising or promotional purposes, creating new collective works, for resale or redistribution to servers or lists, or reuse of any copyrighted component of this work in other works.}

}

\maketitle

\begin{abstract}
The paper presents an algorithm, called Self-Morphing Adaptive Replanning Tree (SMART), that facilitates fast replanning in dynamic environments. SMART performs risk-based tree-pruning if the current path is obstructed by nearby moving obstacle(s), resulting in multiple disjoint subtrees. Then, for speedy recovery, it exploits these subtrees and performs informed tree-repair at hot-spots that lie at the intersection of subtrees to find a new path. The performance of SMART is comparatively evaluated with eight existing algorithms through extensive simulations. Two scenarios are considered with: 1) dynamic obstacles and 2) both static and dynamic obstacles. The results show that SMART yields significant improvements in replanning time, success rate and travel time. Finally, the performance of SMART is validated by a real laboratory experiment. 

\end{abstract}
\vspace{-6pt}
\begin{IEEEkeywords}
Informed replanning, Dynamic environment, Motion and Path Planning, Autonomous Vehicle Navigation.
\end{IEEEkeywords}


\vspace{-6pt}
\section{Introduction}
\IEEEPARstart{T}{ypical} path planning problems in a static environment aim to optimize the path between the start and goal states by minimizing a user-specified cost-function (e.g., travel time)~\cite{Songgupta2019}. However, many real world applications (e.g., airports, factories, malls, offices, hospitals and homes) consist of moving obstacles (e.g., humans, cobots, carts and wheelchairs). It is envisioned that these applications will be increasingly witnessing the role of cobots in supporting humans for various tasks. Fig. \ref{fig:factoryexample} shows a factory scenario, where cobots support the basic operations such as supplying raw materials and tools, disposing off scrap, and floor cleaning~\cite{song2018}. It is desired that these cobots autonomously navigate in dynamic environments while replanning in real-time as needed to achieve: 1) high success rates and 2) low travel times.   

Replanning strategies are characterized as active or reactive. The active strategies predict the future trajectories of moving obstacles~\cite{wang2020eb} to replan the cobot's path; however, their performance degrades in crowded environments where these trajectories are difficult to compute, associate and predict~\cite{hare2020pose}. Therefore, the reactive strategies replan the cobot's path based on the current information. In this regard, this paper presents an algorithm, called Self-Morphing Adaptive Replanning Tree (SMART), that facilitates real-time reactive replanning in dynamic environments for uninterrupted navigation.

\begin{figure}[t]
    \centering
    \includegraphics[width=0.9\columnwidth]{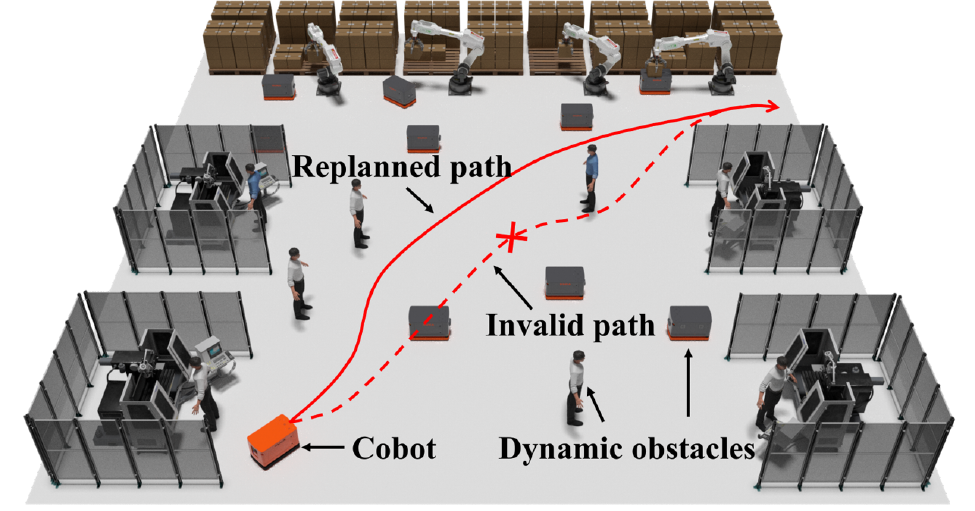}
    \caption{Real-time adaptive replanning in a dynamic factory scenario.}
  \label{fig:factoryexample} \vspace{-18pt}
\end{figure}

\vspace{-6pt}
\subsection{Summary of the SMART Algorithm} 
To initialize, SMART constructs a search-tree using the RRT* algorithm \cite{karaman2011sampling} considering only the static obstacles and finds the initial path. Subsequently, while navigating, the cobot constantly validates its current path for obstructions by nearby dynamic obstacles. If the path is infeasible, SMART performs quick informed replanning that consists of two-steps: 1) tree-pruning and 2) tree-repair. In the tree-pruning step, all risky nodes near the cobot are pruned. This breaks the current tree and forms (possibly) multiple disjoint subtrees. Next, the informed tree-repair step searches for hot-spots that lie at the intersection of different subtrees and provide avenues for real-time tree-repair. 
Then, the utilities of these hot-spots are computed using the shortest-path heuristics. Finally, these hot-spots are incrementally selected according to their utility for merging disjoint subtrees until a new path is found.

\begin{table*}[t]{}
\scriptsize
\caption {Comparison of key features of SMART with other tree-based reactive replanning algorithms. }\label{tab:feature}\vspace{-3pt}
\centering
\setlength\tabcolsep{2.0pt}
\begin{tabular}{l l l l l l l l l} 
 \toprule
 
\specialrule{0em}{1pt}{1pt}\vspace{3pt}
& \textbf{SMART} 
& \textbf{ERRT}('02)\cite{bruce2002real}  
& \textbf{DRRT}('06)\cite{ferguson2006replanning} 
& \textbf{MPRRT}('07)\cite{zucker2007multipartite} 
& \textbf{RRT$^\text{X}$}('16)\cite{otte2016rrtx}
& \textbf{HLRRT*}('19)\cite{chen2019horizon}
& \textbf{EBGRRT}('20)\cite{yuan2020efficient} 
& \textbf{MODRRT*}('21)\cite{qi2020} \\ 
\toprule 

\specialrule{0em}{3pt}{-3pt}
\tabincell{l}{\textbf{Main Tree}\\{\textbf{Root}}} 
& \tabincell{l}{Goal}
& \tabincell{l}{Cobot position} 
& \tabincell{l}{Goal} 
& \tabincell{l}{Cobot position} 
& \tabincell{l}{Goal}
& \tabincell{l}{Cobot position} 
& \tabincell{l}{Cobot position}
& \tabincell{l}{Goal}\\

\specialrule{0em}{3pt}{3pt}
\tabincell{l}{\textbf{Pruning} \\ \textbf{Strategy}} 
& \tabincell{l}{{Prunes risky nodes} \\ {in LRZ. Adds back} \\ {after replanning}}
& \tabincell{l}{{Prunes the} \\ {entire tree}} 
& \tabincell{l}{{Prunes all risky} \\ {nodes and their}\\ {successors}} 
& \tabincell{l}{{Prunes all risky} \\ {nodes}}
& \tabincell{l}{{Assigns infinite} \\{cost to risky} \\{nodes}}
& \tabincell{l}{{Prunes all risky} \\ {nodes and their}\\ {successors}} 
& \tabincell{l}{{Prunes risky path} \\ {nodes and non-} \\ {path successors}}
& \tabincell{l}{{Prunes all risky} \\ {nodes}}\\
\specialrule{0em}{3pt}{3pt}
\tabincell{l}{\textbf{Post-pruning}\\\textbf{Structure}}
& \tabincell{l}{Multiple subtrees} 
& \tabincell{l}{None} 
& \tabincell{l}{Single subtree} 
& \tabincell{l}{Multiple subtrees}
& \tabincell{l}{Graph} 
& \tabincell{l}{Single subtree} 
& \tabincell{l}{Two subtrees}
& \tabincell{l}{Multiple subtrees}   \\

\specialrule{0em}{3pt}{3pt} 
\tabincell{l}{\textbf{Replanning} \\ \textbf{Strategy}} 

& \tabincell{l}{{Reconnects disjoint} \\{subtrees at hot-} \\{spots in an} \\ {informed manner}}

& \tabincell{l}{{Grows a new} \\ {tree by sample} \\ {biasing}} 

& \tabincell{l}{{Regrows remaining} \\ {tree by sample} \\ {biasing}} 

& \tabincell{l}{{Regrows the main} \\ {tree by sample} \\ {biasing}}

& \tabincell{l}{{Graph rewiring} \\ {cascade and repairs}\\ {shortest-path-to-} \\ {goal subtree}}

& \tabincell{l}{{Regrows remaining} \\ {tree by sample} \\ {biasing and lazy} \\ {collision checking}} 

& \tabincell{l}{{Regrows the main} \\ {tree towards goal} \\ {tree by sample} \\ {biasing}} 

& \tabincell{l}{{Reconnects subtree} \\ {roots to main tree} \\ {by feasible} \\ {straight lines}}\\

\specialrule{0em}{3pt}{3pt} 
\tabincell{l}{\textbf{Sampling} \\ \textbf{Strategy}} 

& \tabincell{l}{{Exploits previous}\\ {structure; standard} \\ {sampler if necessary}}

& \tabincell{l}{{Waypoint bias, } \\ {goal bias and } \\ {standard sampler}} 

& \tabincell{l}{{Trimmed-area bias, } \\ {goal bias and } \\ {standard sampler}} 

& \tabincell{l}{{Forest bias, } \\ {goal bias and } \\ {standard sampler}}

& \tabincell{l}{{Standard} \\ {sampler}}

& \tabincell{l}{{GMM-based bias,} \\ {goal bias and } \\ {standard sampler}}

& \tabincell{l}{{Waypoint bias, } \\ {goal bias and } \\ {standard sampler}}  
& \tabincell{l}{None}\\

 \toprule
 \end{tabular}
 \vspace{-18pt}
 \end{table*}

\vspace{-6pt}  
\subsection{Related Work}\label{sec:review}
This section presents a brief literature review of the reactive replanning methods in dynamic environments.

\subsubsection{Tree-based Methods}
Several tree-based replanning methods exist based on different tree pruning and repair strategies. 
Extended RRT (ERRT) \cite{bruce2002real} removes the entire tree when the current path is obstructed and grows a new tree by biasing samples to the previous path. 
Dynamic RRT (DRRT)\cite{ferguson2006replanning} prunes all infeasible nodes and their successors and  regrows the goal-rooted tree biased towards the trimmed area. Multipartite RRT (MPRRT) \cite{zucker2007multipartite} maintains multiple subtrees resulting from node pruning, then reroots the main tree at the cobot's position and reconnects the disjoint tree roots to the main tree by forest biasing. RRT$^\text{X}$ \cite{otte2016rrtx} utilizes a graph to explore the area. When the obstacle information changes, it remodels the search-graph by rewiring cascade and repairs a shortest-path-to-goal subtree to find a new path. Horizon-based Lazy RRT* (HLRRT*) \cite{chen2019horizon} checks the path feasibility within a user-defined time-horizon, and prunes all infeasible nodes and their successors resulting in a single tree, then regrows this tree by biasing samples using a Gaussian mixture model (GMM). Efficient Bias-goal Factor RRT (EBGRRT) \cite{yuan2020efficient} prunes the infeasible path nodes and their non-path successors resulting in a main tree rooted at the cobot's position and a goal tree, then it grows the main tree towards the goal tree. Multi-objective Dynamic RRT* (MODRRT*)\cite{qi2020} connects multiple disjoint tree roots to the goal-rooted tree using feasible straight lines.

There are several differences between SMART and the aforementioned algorithms. First, most algorithms perform node feasibility checking around all detected dynamic obstacles, except HLRRT* which validates the path in a local user-defined horizon. SMART not only restricts path validation but also tree-pruning to the cobot's  neighborhood. Second, some of the above algorithms grow a single (ERRT, DRRT, and HLRRT*) or a  double (EBGRRT) tree-structure after pruning, resulting in repeated exploration of the already-explored area. In contrast, SMART leverages on the maximal tree structure with multiple disjoint subtrees which are incrementally merged during replanning. Third, with the exception of MODRRT*, all above algorithms apply standard sampler for replanning, which could lead to wasteful and slow tree growth. In contrast, SMART performs informed tree-repair at hot-spots.
Table~\ref{tab:feature} shows a comparison of the key features of SMART and other tree-based reactive  replanning algorithms.

\vspace{-0pt}
\subsubsection{Probabilistic Roadmap-based Methods}
The main idea is to construct a road map assuming an obstacle-free space and then update it when the obstacle information is available \cite{leven2002framework}. Both tree-based and probabilistic roadmap based methods have the same time complexity of the processing phase, but the tree-based methods have lower complexity of the query phase, thus making them more suitable for dynamic environments.
 
\subsubsection{Search-based Methods}
D* Lite\cite{koenig2002d} and Lifelong planning A*\cite{koenig2004lifelong} repair an A*-like solution on an underlying graph when the edge costs change given the updated obstacle information. In contrast, SMART finds the hot-spots for fast tree-repair and allows for random sampling if necessary.

\subsubsection{Other Methods}
Some optimization-based methods were proposed such as covariant Hamiltonian optimization for motion planning (CHOMP) \cite{zucker2013chomp}, which use functional gradient techniques to improve the quality of an initial path. Some papers proposed the idea of an escape trajectory as a contingency plan in danger situations\cite{hsu2002randomized,hauser2012responsiveness}. 
A concept of inevitable collision state was proposed in \cite{Fraichard2004inevitable}, where a future collision cannot be avoided. In contrast, SMART identifies critical regions where there is a collision risk and deletes the nodes within. 
Moreover, velocity obstacle based methods\cite{snape2011hybrid} and reinforcement learning based methods \cite{zhu2022collision,fan2020distributed} have also been proposed for incremental planning.

\vspace{-6pt}
\subsection{Contributions}
The paper makes the following contributions: 
\begin{itemize}
\item Development of the SMART algorithm based on fast informed-replanning for real-time dynamic environments.
\item Comprehensive comparison to existing algorithms.
\item Validation using simulation and experimental tests.
\end{itemize}

\vspace{-6pt}
\subsection{Organization}
The remainder of this paper is organized as follows. Section~\ref{sec:smart} presents the details of SMART algorithm and  Section~\ref{analysis} provides the algorithm analysis. Section~\ref{sec:results} shows the simulation and experimental results. Finally, Section~\ref{sec:conclusions} concludes the paper with recommendations for future work.

\vspace{-6pt}
\section{SMART Algorithm}\label{sec:smart}
Let $\mathcal{X}\subset\mathbb{R}^2$ be a region populated by both static and dynamic obstacles. Let $\mathcal{X}_{N} \subset \mathcal{X}$ be the configuration space free of static obstacles. Let $\mathcal{O}=\{O_i: i=1,2,...m\}$ be the set of $m$ dynamic circular obstacles, where $r_i \in \mathbb{R}^+$ is the radius of obstacle $O_i\in \mathcal{O}$, and $x_i(t)  \in \mathcal{X}_{N}$ and $v_i(t) \in \mathbb{R}^+$ denote its position and speed  at time $t \in \mathbb{R}^+$, respectively. Let $\mathcal{R}$ be a circular cobot of radius $r_{\mathcal{R}} \in \mathbb{R}^+$, where $x_{\mathcal{R}}(t) \in \mathcal{X}_N$ and $v_{\mathcal{R}}(t) \in \mathbb{R}^+$ denote its position and speed at time $t$, respectively. Let ($x_{s}$, $x_{g}$) denote the start and goal positions.   

\vspace{-6pt}
\subsection{Initialization}
First, a tiling is constructed on the space $\mathcal{X}$ as defined below. 
\begin{defn}[Tiling]\label{define:Tiling}
A set $\mathcal{C} = \{c_{j} \subset \mathbb{R}^2:j= 1,\ldots |\mathcal{C}|\}$, is a tiling of $\mathcal{X}$, if its elements, called tiles (or cells), have mutually exclusive interiors and cover $\mathcal{X}$, i.e., 
\begin{itemize}
\item  $c^o_{j} \cap c^o_{j'} =\emptyset, \forall {j},{j'} \in \{1,\ldots |\mathcal{C}|\}, {j} \neq {j'}$\\
\item  $\mathcal{X} \subseteq \bigcup_{j=1}^{|\mathcal{C}|}c_{j}$,
\end{itemize} 
where $c^o_{j}$ denotes the interior of cell $c_{j} \in \mathcal{C}$. 
\end{defn}
{Note: The tiling is used only for searching hot-spots (Section~\ref{sec:tree-repair}). The planning and navigation happens in $\mathcal{X}_{N}$.}

SMART is initialized by constructing a RRT*~\cite{karaman2011sampling} tree $\mathcal{T}^0=(\mathcal{N}^0,\mathcal{E}^0)$ rooted at the goal $x_{g}$, where $(\mathcal{N}^0,\mathcal{E}^0)$ denote the sets of nodes and edges. It is recommended that the tree is created such that each cell of the tiling has at least one node to ensure robust and high-quality tree-repair. Each node maintains a data structure including the information about its position, parent, children, tree index, cell index and node status as active or pruned. Initially, all nodes are marked active. Then an initial path is found using  $\mathcal{T}^0$. The path is time embedded to produce the initial trajectory $\sigma^{0}:[t_0,t_f] \rightarrow \mathcal{X}_{N}$, which is a continuous and bounded function, s.t. $\sigma^{0}(t_0) = x_{s}$ and $\sigma^{0}(t_f) = x_{g}$. As the cobot moves,  $\sigma^{0}$ could be blocked by dynamic obstacles $\mathcal{O}$. Thus, the paper presents an informed replanning strategy consisting of two-steps: 1) tree-pruning and 2) tree-repair.

\vspace{-6pt}
\subsection{Tree-Pruning}
\label{prune}
To find a safe trajectory via replanning, it is important to identify and prune the risky nodes of the tree. Since: 1) dynamic obstacles far away from the cobot do not pose an immediate risk, and 2) it is computationally inefficient to do feasibility checking around all detected obstacles, the paper presents a local tree-pruning (LTP) strategy described below. 

\begin{defn}[Local Reaction Zone]\label{define:LRZ}
A local reaction zone \textup{(LRZ)} is defined for the cobot $\mathcal{R}$, such that $\textup{LRZ}_{\mathcal{R}}(t)=\{x\in \mathcal{X}_N:||x-x_{\mathcal{R}}(t)||\leq $ $v_{\mathcal{R}}(t)\times T_{RH}\}$, where $T_{RH} \in \mathbb{R}^+$ is the reaction time-horizon. 
\end{defn}

\begin{defn}[Obstacle Hazard Zone]\label{define:OHZ}
An obstacle hazard zone \textup{(OHZ)} is defined for each dynamic obstacle $O_i \in \mathcal {O}$, such that  $\textup{OHZ}_{i}(t) =\{x\in \mathcal{X}_N: ||x-x_{i}(t)||\leq v_i(t)\times T_{OH} + r_{i} + r_{\mathcal{R}} \}$, where $T_{OH} \in \mathbb{R}^+$ is the obstacle risk time-horizon.
\end{defn}

\begin{defn}[Critical Pruning Region]\label{define:CPR}
Let $\mathcal{D}$ denote all obstacles that are intersecting with the \textup{LRZ} and pose danger to the cobot such that $\mathcal{D}=\{O_i: \textup{LRZ}_{\mathcal{R}}(t)\cap \textup{OHZ}_i(t) \neq \emptyset\}$. Then, the critical pruning region \textup{(CPR)} is defined as
\begin{equation}
\textup{CPR}(t)= \bigcup_{\mathcal{D}} \textup{OHZ}_i(t).
\end{equation}
\end{defn}

Fig.~\ref{fig:example_part1} shows the LTP strategy. The cobot constantly checks the validity of its current path by checking the path nodes and edges in LRZ starting from its current position. According to the LTP strategy, if the current path is invalid, then all tree portions that fall within the CPR are considered to be risky for replanning and are thus pruned. If a node is invalid, it is pruned along with its edges. If an edge is invalid but its two end nodes are safe, then only this edge is pruned. The parent and child information of all affected nodes is updated. Tree-pruning could break the current tree $\mathcal {T}$ into disjoint subtrees.

\begin{defn}[Disjoint Subtree]\label{define:dissub}
A disjoint subtree is a portion of the current tree $\mathcal{T}$ whose root is either the goal or a child of a pruned node or edge. 
\end{defn}

Let $\mathcal{T}_0,\mathcal{T}_1,...\mathcal{T}_{K-1}$ be the $K\in \mathbb{N}^+$ disjoint subtrees formed after pruning, where $\mathcal{T}_0$ is the subtree rooted at the goal. Similarly, let $\mathcal{N}^a$ and $\mathcal{N}^p$ be the sets of alive and pruned nodes, respectively, such that $\mathcal{N}^0=\mathcal{N}^a \bigcup \mathcal{N}^p$.

\begin{figure*}[t]
    \centering
    \subfloat[The current path becomes invalid in the LRZ. Thus, the CPR is identified and the local tree inside the CPR is pruned to form multiple disjoint trees.]{
        \includegraphics[width=0.32\textwidth]{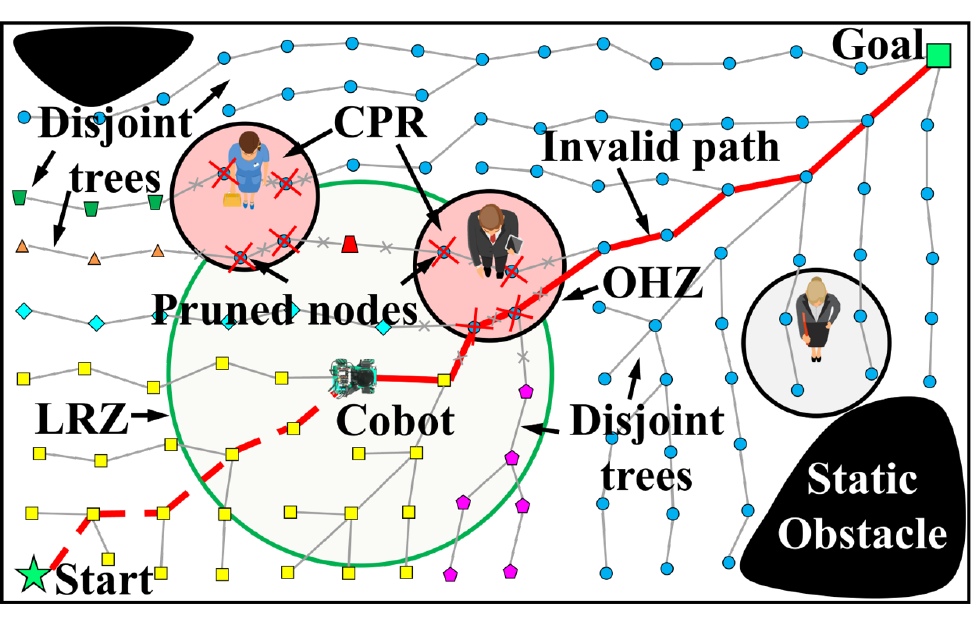}\label{fig:example_part1}}\vspace{0pt}\hspace{-8pt}\quad
    \centering
    \subfloat[The LSR ($\mathcal{S}^3$) is formed around the pruned path node closest to the cobot. Then, the subtree nodes and hot-spots are identified. Hot-spots are color-coded with utilities and the highest one is selected.]{
         \includegraphics[width=0.32\textwidth]{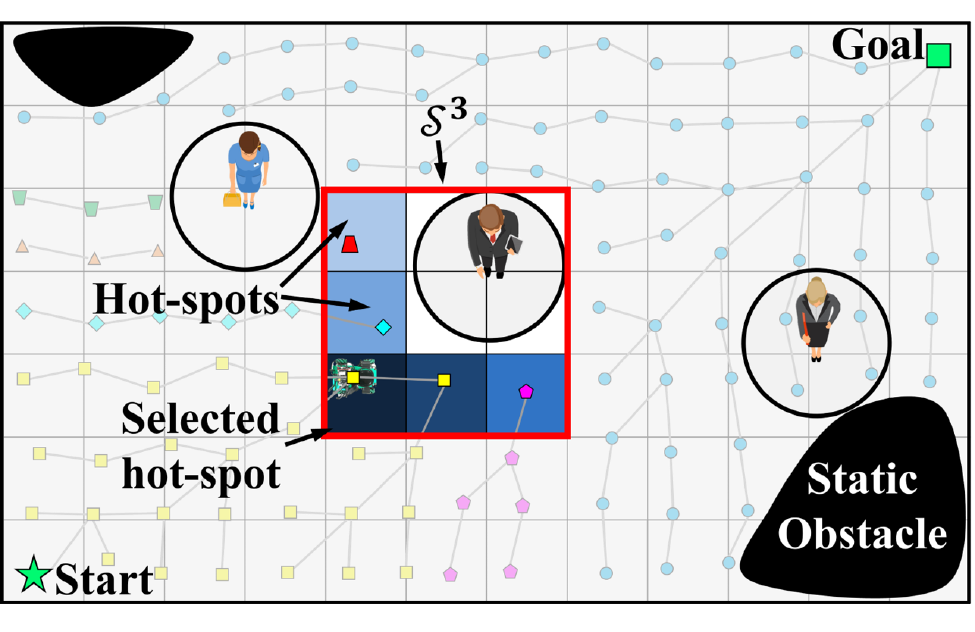}\label{fig:example_part2}}\hspace{-8pt}\quad
    \centering
    \subfloat[The tree is repaired by connecting the nodes of the disjoint subtrees in the neighborhood of the previously selected hot-spot. Then, the map is updated and the new highest utility hot-spot is selected.
    ]{
         \includegraphics[width=0.32\textwidth]{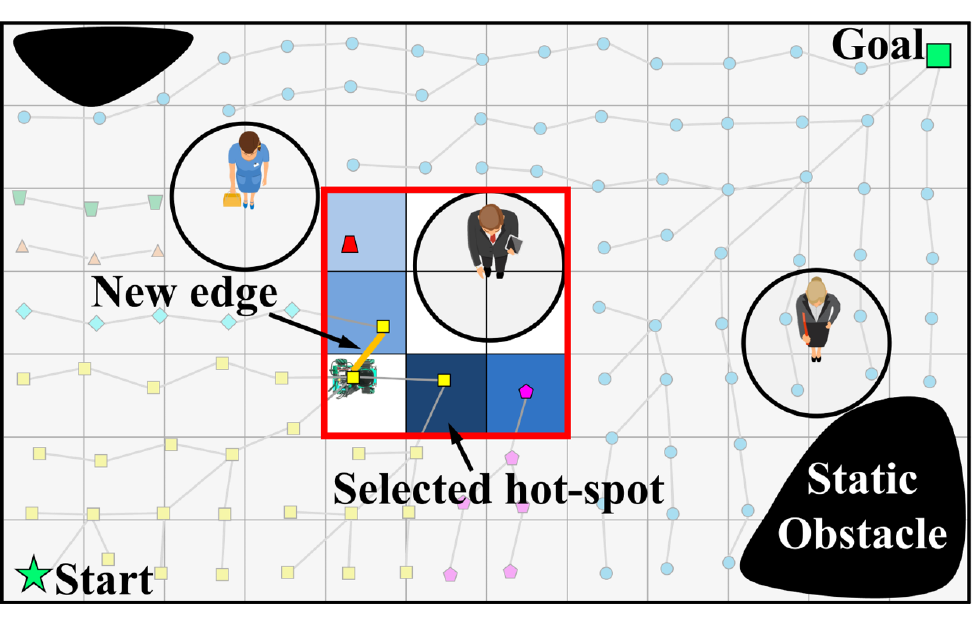}\label{fig:example_part3}}\hspace{-8pt}\\

    \centering
    \subfloat[The tree is further repaired at the selected hot-spot. Then, the map is updated and the new highest utility hot-spot is selected.]{
        \includegraphics[width=0.32\textwidth]{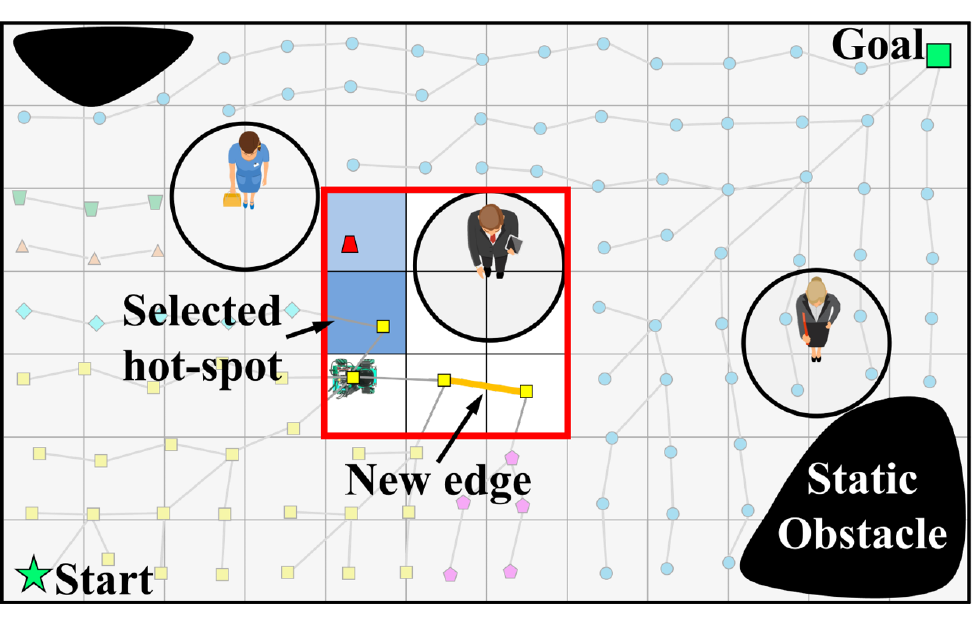}\label{fig:example_part4}}\vspace{0pt}\hspace{-8pt}\quad
    \centering
    \subfloat[The tree is further repaired and the map is updated. Since there are no more hot-spots in $\mathcal{S}^3$ and $\mathcal{T}_0$ is still disconnected, $\mathcal{S}^3$ is expanded to $\mathcal{S}^5$.]{
         \includegraphics[width=0.32\textwidth]{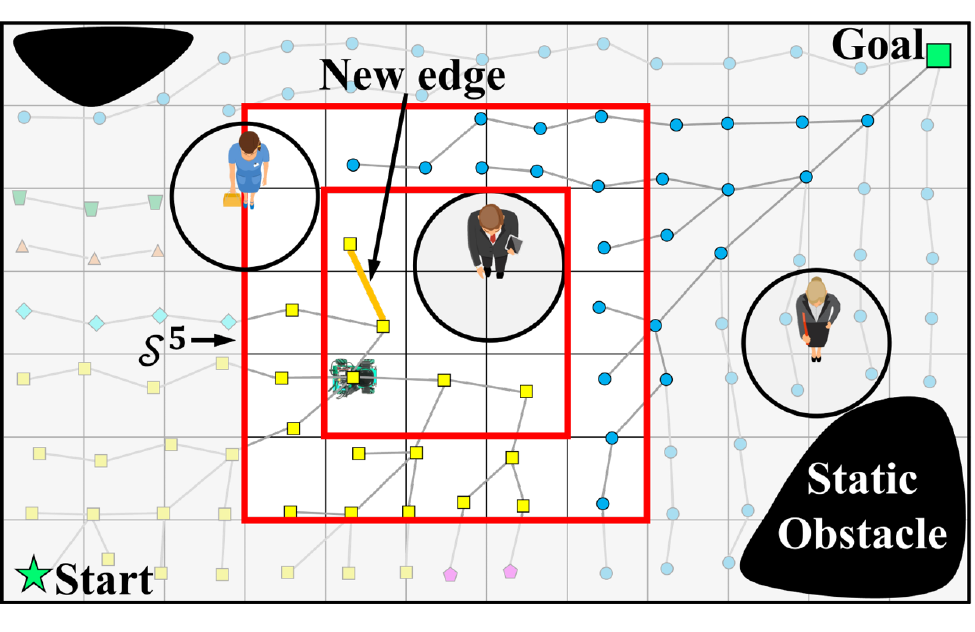}\label{fig:example_part5}}\hspace{-8pt}\quad
     \centering
    \subfloat[Hot-spots are identified in $\mathcal{S}^5$ and color-coded with their utilities. The hot-spot with the highest utility is selected.]{
         \includegraphics[width=0.32\textwidth]{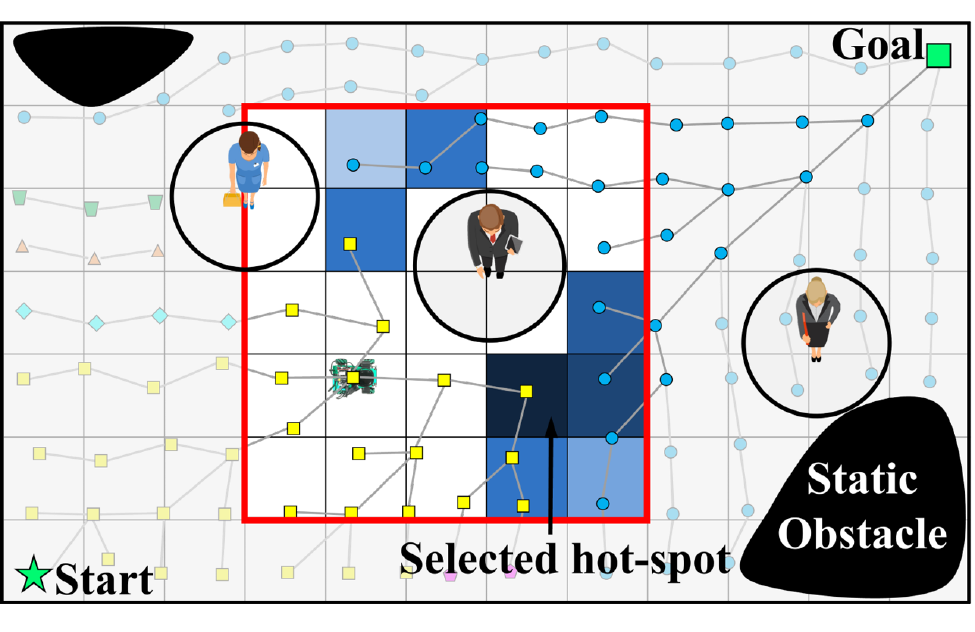}\label{fig:example_part6}}\hspace{-8pt}\\

    \centering
    \subfloat[The tree is further repaired and the map is updated; the goal-rooted subtree $\mathcal{T}_0$ is connected.]{
        \includegraphics[width=0.32\textwidth]{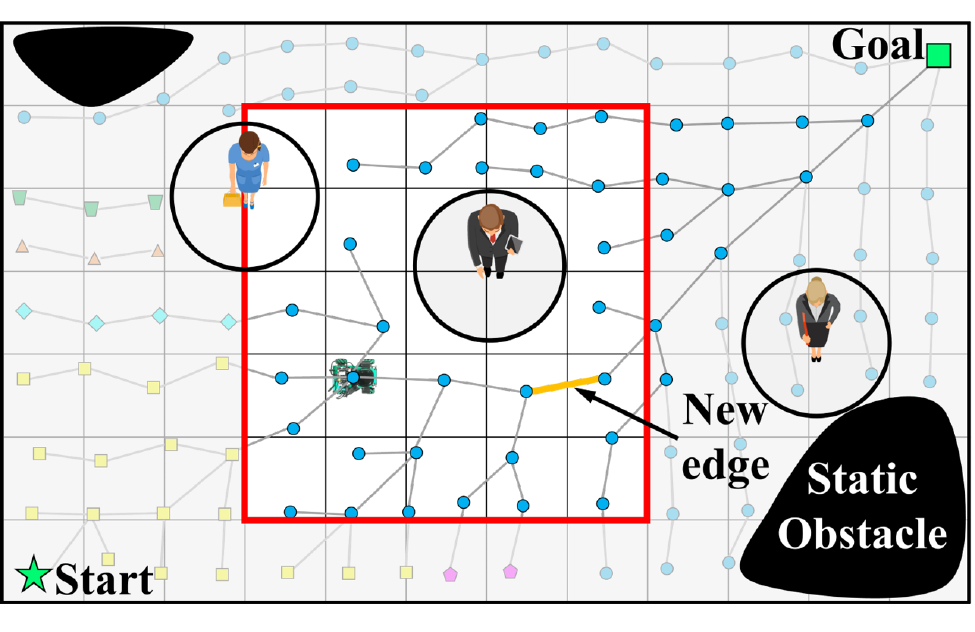}\label{fig:example_part7}}\vspace{0pt}\hspace{-8pt}\quad
    \centering
    \subfloat[The repaired tree is optimized via a rewiring cascade, updating the cost-to-go of its nodes.]{
         \includegraphics[width=0.32\textwidth]{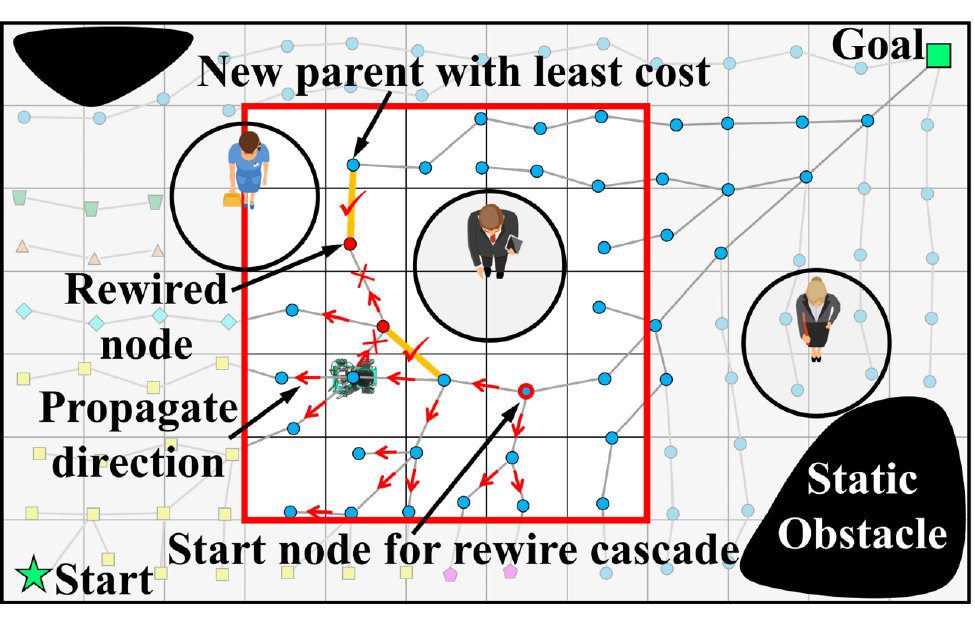}\label{fig:example_part8}}\hspace{-8pt}\quad
    \centering
    \subfloat[Finally, the path is replanned. Then, all pruned nodes are added back and any disjoint trees left are merged to form a single morphed tree $\mathcal{T}$.]{
         \includegraphics[width=0.32\textwidth]{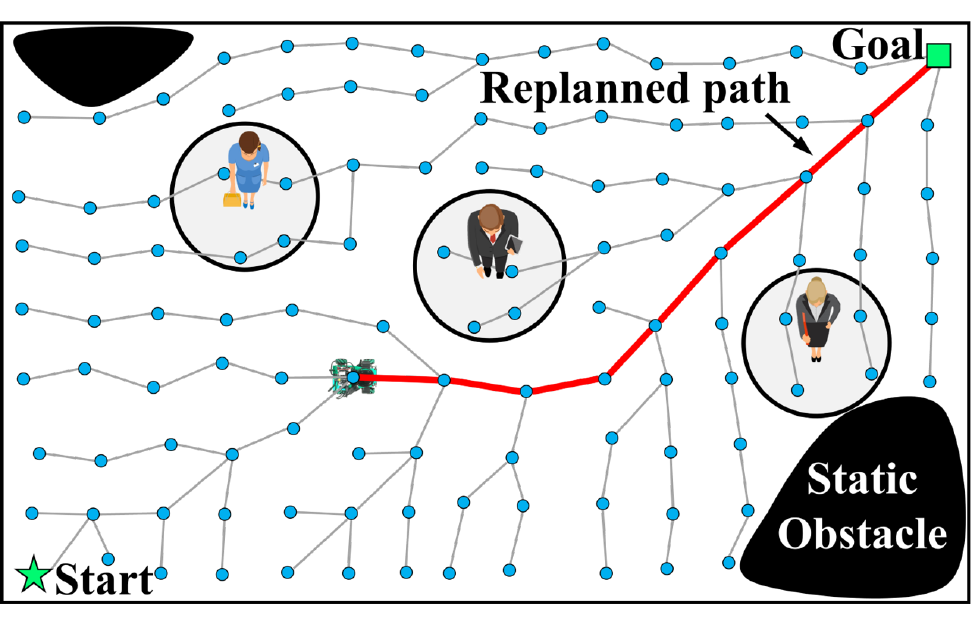}\label{fig:example_part9}}\hspace{-10pt}\\
          \caption{ Illustration of the SMART algorithm: a) tree-pruning and disjoint tree creation, and b)-i) tree-repair and replanning. }\label{fig:example}
          \vspace{-15pt}
 \end{figure*}

\vspace{-6pt}
\subsection{Tree-Repair}\label{sec:tree-repair}
The tree-pruning step is followed by the tree-repair step, which is done to find an updated path to the goal. Let $t_u$ be the time instant at which the current path is blocked and the objective is to find an updated safe trajectory $\sigma:[t_u,t'_f] \rightarrow \mathcal{X}_{N}\setminus \textup{CPR}(t_u)$, s.t. $\sigma(t_u)=x_{\mathcal{R}}(t_u)$ and $\sigma(t'_f)=x_{g}$. 

Fig.~\ref{fig:example_part2}-\ref{fig:example_part9} show an example of the tree-repair process. For fast and efficient tree-repair, SMART exploits the previous exploration efforts by retaining all the disjoint subtrees and pruned nodes for possible future additions. The tree-repair process starts by searching for hot-spots, where the nodes of disjoint subtrees lie in a local neighborhood. Thus, hot-spots are the candidate cells which provide the opportunities for tree-repair by reconnecting the node-pairs of disjoint subtrees.

\begin{defn}[Hot-Spot]\label{define:eligible_cell}
A cell $c \in \mathcal{C}$ is a hot-spot if:
\begin{itemize}
\item [1.] it contains alive nodes of a) at least two disjoint subtrees, or b) a single subtree and at least one of its neighboring cell contains nodes of another disjoint subtree, and
\item [2.] the edge connecting at least one node pair corresponding to the two disjoint subtrees in 1.a) or 1.b) above is not obstructed by the CPR or static obstacles. 
\end{itemize} 
\end{defn}

\vspace{3pt}
\subsubsection{Search for Hot-Spots}\label{search}
The search begins in the vicinity of the damaged path. Thus, the following is defined.
 \begin{defn}[Local Search Region]\label{define:LSR}
Let $\hat{n} \in \mathcal{N}^p$ be the pruned path node closest to the cobot. Let $\hat{n}$ belong to a cell $c_{\hat{n}}\in \mathcal{C}$. Then, the local search region \textup{(LSR)} is defined as the $\ell\times\ell$ neighborhood ($\mathcal{S}^{\ell} \subseteq \mathcal{C}$) of $c_{\hat{n}}$ s.t. $\ell>1$ is an odd number.  
\end{defn}

The search starts in $\mathcal{S}^{\ell}$, $\ell=3$. It is a two-step process.

\vspace{3pt}
a) \textit{Identifying Disjoint Subtrees}:
The first step is to identify and label the disjoint subtrees in $\mathcal{S}^{\ell}$ (Fig.~\ref{fig:example_part1}). To do this, an unlabeled node $n \in \mathcal{N}^a$ is picked within $\mathcal{S}^{\ell}$ and assigned a tree index $k\in \{0,...K-1\}$, where index $0$ corresponds to the goal rooted subtree. Then it is backtracked while labeling all ancestor nodes with the same tree index until reaching either 1) the unlabeled subtree root or 2) a labeled ancestor (or root) from a previous backtracking. In the second case, all nodes visited during backtracking are labeled with the same tree index as that of this ancestor. The above process is repeated until all nodes inside $\mathcal{S}^{\ell}$ are labeled with their tree-indices.

\vspace{3pt}
b) \textit{Identifying Hot-Spots}: The hot-spots are identified by searching all cells and their neighbors within $\mathcal{S}^{\ell}$. Then, a hot-spot map is constructed on $\mathcal{S}^{\ell}$ as follows.  

\begin{defn}[Hot-Spot Map]\label{define:validity}
A hot-spot map is defined on $\mathcal{S}^{\ell}$ such that  
$h^{\ell}: \mathcal{S}^{\ell}\rightarrow \{1, -1\}$, where $1,-1$ denote hot-spot and not a hot-spot, respectively.
\end{defn}


Let $\mathcal{H}^{\ell}\subseteq \mathcal{S}^{\ell} $ denote the set of all hot-spots in $\mathcal{S}^{\ell}$ (Fig.~\ref{fig:example_part2}). If no hot-spot is found within $\mathcal{S}^{\ell}$, s.t. $h^{\ell}(c)= -1, \forall c \in \mathcal{S}^{\ell}$, then the search area is expanded to size $\ell=\ell+2$ and steps a) and b) are repeated until at least one hot-spot is found.

\vspace{3pt}
\subsubsection{Ranking of the Hot-Spots} \label{ranking}
The hot-spots $\mathcal{H}^{\ell}$ are ranked using a utility map (Fig.~\ref{fig:example_part2}) to direct repairing to the region that has the shortest cost-to-come and cost-to-go. 

\begin{defn}[Utility Map]\label{define:utility}
A utility map is defined on  $\mathcal{H}^{\ell}$ such that
$\mathcal{U}^{\ell}: \mathcal{H}^{\ell}\rightarrow \mathbb{R}^+$, where the utility of a cell $c \in \mathcal{H}^{\ell}$ is computed as follows  
\begin{equation}
    \mathcal{U}^{\ell}(c)=\begin{cases}
\frac{1}{\left \|x_{\mathcal{R}}(t_u)-p_c\right \|_2+\displaystyle \min_{n_i\in \mathcal{N}_{c} \cap \mathcal{N}_0}g(n_i)} &  \text{if} \ \mathcal{N}_{c} \cap \mathcal{N}_0 \neq \emptyset \\
\frac{1}{\left \|x_{\mathcal{R}}(t_u)-p_c\right \|_2+\left \|p_c-x_g\right \|_2} &  \text{else},
\end{cases}\vspace{-10pt}
\end{equation}   
\end{defn} where $p_c \in \mathbb{R}^2$ is the centroid of a hot-spot $c \in \mathcal{H}^{\ell}$; $\mathcal{N}_{c}\subset \mathcal{N}^{a}$ is the set of alive nodes inside $c$; $\mathcal{N}_{0}\subset \mathcal{N}^{a}$ is the set of nodes of the goal-rooted subtree $\mathcal{T}_0$; $g(n_i)$ returns the travel cost from node $n_i$ to the goal via the shortest path on $\mathcal{T}_0$. In summary, if the hot-spot contains a node of $\mathcal{T}_0$, then its utility is determined by the heuristic cost-to-come and the actual cost-to-go; otherwise, it is given by the total heuristic cost.

\vspace{3pt}
\subsubsection{Tree-reconnections}\label{repair} This consists of the following steps:
\begin{itemize}
\item [a.] Pick the hot-spot with the highest utility (Figs.~\ref{fig:example_part2}-\ref{fig:example_part4} and \ref{fig:example_part6}). (If there is no hot-spot in $\mathcal{S}^{\ell}$ after tree-repairing, then $\ell=\ell+2$ and go back to~\ref{search} (Fig.~\ref{fig:example_part5})).
\item [b.] Pick an unexamined node from the selected hot-spot. (If all nodes have been examined, then go back to a.)
\item [c.] Pick an unexamined node belonging to a different subtree from the selected hot-spot or its local neighborhood. (If all nodes from different subtrees have been examined, then go back to step b.)
\item [d.] Connect the above node-pair if the edge is feasible (Figs.~\ref{fig:example_part3}-\ref{fig:example_part5} and \ref{fig:example_part7}); otherwise, go back to step c.
\item [e.] Update the  parent-child relationships and subtree indices (Figs.~\ref{fig:example_part3}-\ref{fig:example_part5} and \ref{fig:example_part7}): 1) if any node from the node-pair above belongs to $\mathcal{T}_0$, then this node is set as the parent and all nodes of the connected subtree in $\mathcal{S}^{\ell}$ are assigned the index of $\mathcal{T}_0$ and their cost-to-go are updated, or 2) if none of the nodes in the node-pair belongs to $\mathcal{T}_0$, then one of them is set as the parent and all nodes of the connected subtree in $\mathcal{S}^{\ell}$ are assigned its index. A change in a node's parent is propagated to all its ancestors.
\item [f.] Update the hot-spot and utility maps (Figs.~\ref{fig:example_part3}-\ref{fig:example_part5} and \ref{fig:example_part7}).  
\item [g.]  Check if $\mathcal{T}_0$ is reachable from $x_{\mathcal{R}}$ (Fig.~\ref{fig:example_part7}). If true, then a path can be found to the goal; otherwise, go back to c.

\end{itemize}

\begin{rem}
    If the entire space is exhausted by informed tree-repair and no path is found using alive samples, then random sampling and repairing is done for probabilistic completeness. 
\end{rem}

\RestyleAlgo{ruled}
\LinesNumbered
\begin{algorithm}[t]
\footnotesize
$\{\mathcal{T}^0,\sigma^0\} \leftarrow \textbf{RRT*}(x_{s},x_{g},{\mathcal{X}}_{N})$\tcp*[l]{initialization}
$t=t_0$,  $\mathcal{T}= \mathcal{T}^0$, $\sigma=\sigma_{0}$\; 
\While (\tcp*[h]{goal unreached}) {$x_{\mathcal{R}} \neq x_{g}$}{
    $t \leftarrow \textbf{UpdateClock}()$\;
    $\{x_i(t),v_i(t)\}_{i=1,..m} \leftarrow \textbf{UpdateObstacleState}()$\;
    $\{x_{\mathcal{R}}(t),v_{\mathcal{R}}(t) \} \leftarrow \textbf{UpdateCobotState}()$\;
    \eIf{$\textbf{ValidatePath}(\sigma, \textup{LRZ}_{\mathcal{R}}(t), \{\textup{OHZ}_i(t)\}_{i=1,..m})$}{
    $\textbf{Navigate}(\sigma)$\;
    }{
        $\sigma \leftarrow void$\tcp*[l]{invalid path}
        
        $\{\mathcal{T}_0,...\mathcal{T}_{K-1}\} \leftarrow\textbf{TreePruning} (\mathcal{T},\textup{CPR}(t))$\tcp*[l]{\ref{prune}}
        
        $\ell \leftarrow 1, \mathcal{H}^{\ell} \leftarrow \emptyset, \mathcal{N}_{s} \leftarrow \emptyset$\tcp*[l]{initialize repair}
        
            \While {$\mathcal{T}_0$ is not reachable from $x_{\mathcal{R}}$}{
                \eIf(\tcp*[h]{informed tree-repair}){$\ell < {\ell}_{max}$}{
                    $\ell \leftarrow \ell+2$\;
                    $\mathcal{H}^{\ell} \leftarrow \textbf{HotSpotSearch}(\mathcal{S}^{\ell})$\tcp*[l]{\ref{search}}
                    $\mathcal{U}^{\ell} \leftarrow \textbf{ComputeUtility}(\mathcal{H}^{\ell})$\tcp*[l]{\ref{ranking}} 
                   $\{{\mathcal{N}}_s,{\mathcal{T}}_0 \} \leftarrow \textbf{TreeReconnection}(\mathcal{U}^{\ell}, \mathcal{S}^{\ell})$\tcp*[l]{\ref{repair}}
                }(\tcp*[h]{standard tree-repair}){
                    $x_{rand} \leftarrow \textbf{SampleFree()}$\;
                    Join $x_{rand}$ to all nearby reachable subtrees\;
                    \lIf{$x_{rand} \in \mathcal{T}_0$}{$\textbf{Rewire}(x_{rand})$}
                                        
                }
            }
            $\mathcal{T}_0 \leftarrow\textbf{TreeOptimization}(\mathcal{T}_0,\mathcal{N}_s)$\tcp*[l]{\ref{path}}
            $\sigma \leftarrow \textbf{PathSearch}(x_{\mathcal{R}}, x_g, \mathcal{T}_0)$\tcp*[l]{\ref{path}}
            Add $\mathcal{N}^p$ and disjoint subtrees to $\mathcal{T}_0$ to get a single tree $\mathcal{T}$\;
            
    }
}
\caption{SMART}
\label{alg:SMART} 
\end{algorithm}
\setlength{\textfloatsep}{0pt}

\vspace{-6pt}
\subsection{Tree-Optimization and Path Search}
\label{path} 

During tree-repair, several subtrees were connected to ultimately merge with $\mathcal{T}_0$. These subtrees are optimized using 
rewiring cascade~\cite{otte2016rrtx} starting from all subtree nodes $\mathcal{N}_s$ connected to $\mathcal{T}_0$ (Fig.~\ref{fig:example_part8}). 
Then, an updated trajectory $\sigma$ is found using $\mathcal{T}_0$ (Fig.~\ref{fig:example_part9}). After that, all nodes in $\mathcal{N}^p$ and the roots of the remaining disjoint trees are reconnected with $\mathcal{T}_0$ and a single tree $\mathcal{T}$ is formed. Algorithm~\ref{alg:SMART} describes SMART.

\vspace{-6pt}
\section{Algorithm Analysis}\label{analysis}
Let $\mathcal{W}$ denote the minimal cover of the CPR using the tiling $\mathcal{C}$. Let $\mathcal{N}_{\mathcal{W}}$ be the set of all nodes in $\mathcal{W}$ s.t. $\mathcal{N}^p\subseteq \mathcal{N}_{\mathcal{W}}\subseteq \mathcal{N}^0$.  Let $\overline{n} \geq 1$ be the average number of nodes per cell.
Let the neighbors of a node be defined as all nodes in the same and adjacent cells of that node. Let $h$ be the average number of node neighbors. Let $S^{\widetilde{\ell}}$, $\widetilde{\ell}\leq \ell_{max}$, be the smallest LSR where sufficient hot-spots are found to connect the cobot to $\mathcal{T}^0$.

\begin{lem}
    The tree-pruning complexity is $O(|\mathcal{N}_{\mathcal{W}}|)$. 
    \label{lem:treepruning}
\end{lem}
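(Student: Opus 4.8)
The plan is to establish the complexity by a direct accounting of the operations performed by the tree-pruning step (Section~\ref{prune}) and showing that each is bounded by a constant amount of work per node in $\mathcal{N}_{\mathcal{W}}$. First I would observe that tree-pruning only ever touches nodes and edges lying within the $\textup{CPR}(t)$, and since $\mathcal{W}$ is the minimal cover of the CPR by the tiling $\mathcal{C}$, every node that is inspected or modified during pruning belongs to $\mathcal{N}_{\mathcal{W}}$. Thus the first step is to argue that the set of candidate nodes visited by the feasibility check is a subset of $\mathcal{N}_{\mathcal{W}}$, so the outer loop runs at most $|\mathcal{N}_{\mathcal{W}}|$ times.

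Next I would analyze the per-node cost. For each node in the CPR, the LTP strategy performs: (i) a feasibility/containment test against the CPR, (ii) if the node is invalid, its removal together with incident edges, and (iii) the update of parent and child pointers of the affected node and its immediate neighbors in the tree. The key point to verify is that each of these is $O(1)$ given the node data structure described in the initialization (position, parent, children, tree index, cell index, status), so that pruning a single node is constant-time. The edge-only pruning case (invalid edge with safe endpoints) is likewise $O(1)$ per edge, and since the tree has $|\mathcal{N}_{\mathcal{W}}|-1$ internal edges restricted to this region, the total edge work is also $O(|\mathcal{N}_{\mathcal{W}}|)$. Summing a constant amount of work over at most $|\mathcal{N}_{\mathcal{W}}|$ nodes yields the claimed $O(|\mathcal{N}_{\mathcal{W}}|)$ bound.

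I expect the main obstacle to be justifying that updating the parent-child information of all affected nodes is genuinely $O(1)$ per node rather than hiding a factor that depends on the tree structure. When a node is pruned, each of its children becomes a new subtree root, and these children must have their parent pointer cleared; this is $O(\text{number of children})$, not strictly $O(1)$ per pruned node. The careful argument is that the total number of such child-reassignments, summed over all pruned nodes, is bounded by the total number of edges incident to $\mathcal{N}_{\mathcal{W}}$, which is again $O(|\mathcal{N}_{\mathcal{W}}|)$ because each edge is processed a constant number of times. Hence an amortized, rather than strictly worst-case-per-node, accounting closes the gap and preserves the linear bound. I would present the proof in this amortized form to make the constant-work-per-edge claim precise.
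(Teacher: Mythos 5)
Your proposal is correct and follows essentially the same route as the paper's proof: a direct operation count showing that the node feasibility checks and the edge checks are each linear in $|\mathcal{N}_{\mathcal{W}}|$, using the tree structure to bound the number of edges. Your amortized accounting of the parent--child pointer updates is a slight refinement the paper leaves implicit, while you omit the paper's explicit (but dominated) costs of identifying the CPR and querying the cover $\mathcal{W}$, which it absorbs via $|\mathcal{N}_{\mathcal{W}}|\geq |\mathcal{W}| \geq |\mathcal{D}|$; neither difference changes the substance of the argument.
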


\begin{proof}  
For tree-pruning, the first step is to identify the CPR for $|\mathcal{D}|$ obstacles that intersect with the LRZ; this has a complexity of $O(|\mathcal{D}|)$.  Next, $\mathcal{W}$ is determined by querying cells using the obstacle locations in the CPR; this has a complexity of $O(|\mathcal{W}|)$. Then, the nodes falling inside the $\mathcal{W}$ cells are queried from a data structure and checked for feasibility; this has a complexity of $O(2|\mathcal{N}_{\mathcal{W}}|)$. 
Since SMART uses a search tree, there are at most $|\mathcal{N}_{\mathcal{W}}|$ edges that need to be checked; this has a complexity of $O(|\mathcal{N}_{\mathcal{W}}|)$. Thus, the overall complexity is $O(3|\mathcal{N}_{\mathcal{W}}|+|\mathcal{W}|+|\mathcal{D}|)$. Since $|\mathcal{N}_{\mathcal{W}}|\geq |\mathcal{W}| \geq |\mathcal{D}|$,  
the overall complexity of tree-pruning  is $O(|\mathcal{N}_{\mathcal{W}}|)$.
\end{proof}

\begin{figure}[t]
    \centering
    \subfloat[Scenario 1: An open space with 10 and 15 dynamic obstacles.]{
        \includegraphics[width=0.450\textwidth]{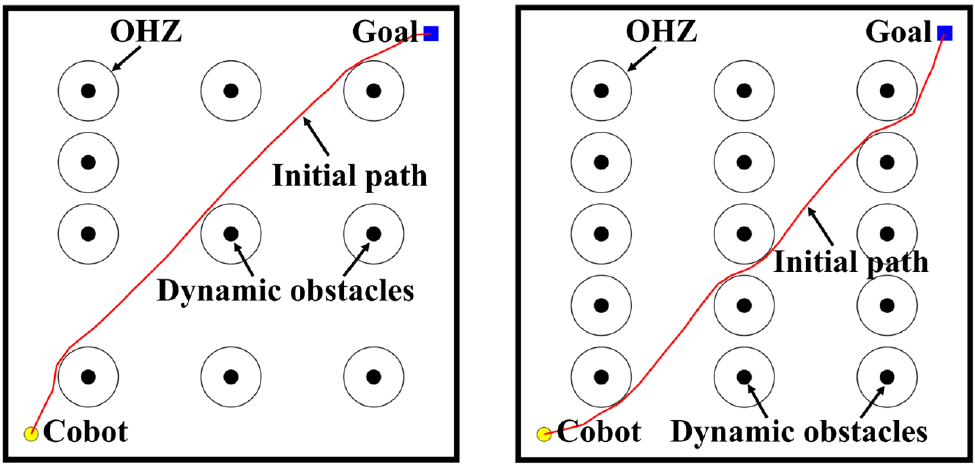}\label{fig:openspace}}\quad  \vspace{6pt}
    \centering
    \subfloat[Scenario 2: A factory with both static and dynamic obstacles.]{
        \includegraphics[width=0.45\textwidth]{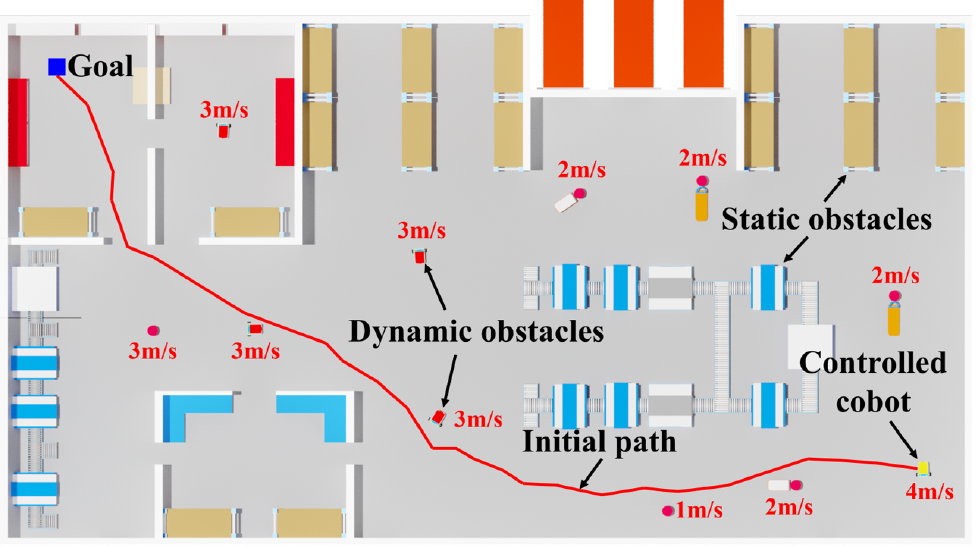}\label{fig:factory}}\\ \vspace{-6pt}    
    \caption{Simulation testing scenarios. }\label{fig:scenario}\vspace{-0pt}   
 \end{figure}

\begin{figure*}[t]
    \centering
    \subfloat[Success rate and average replanning time for Scenario 1 with 10 moving obstacles.]{
        \includegraphics[width=0.95\textwidth]{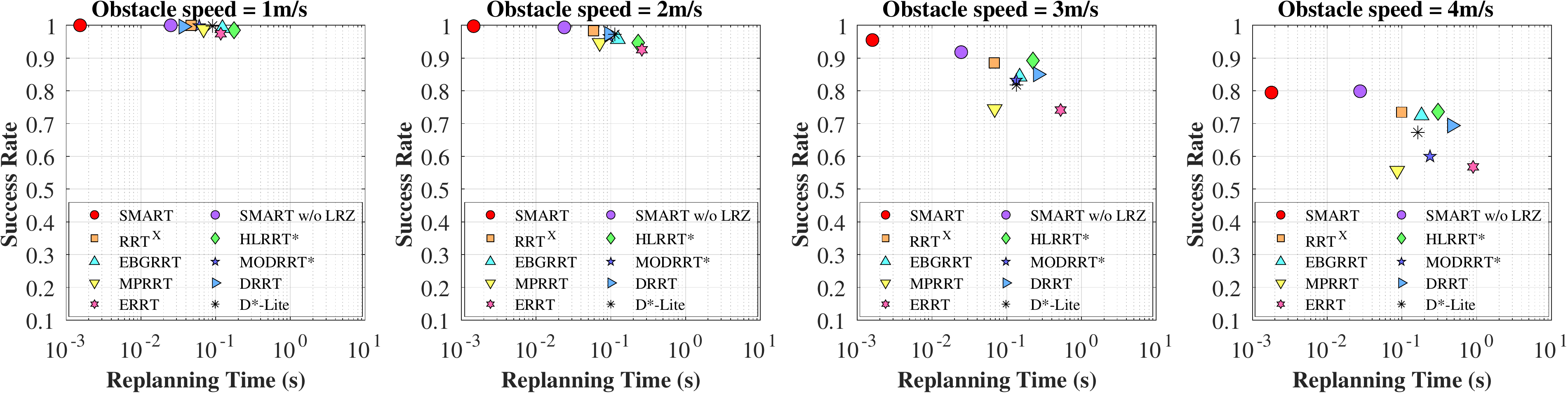}\label{fig:ReplanTime_successRate_part1}}\vspace{5pt}\\
    \centering
    \subfloat[Success rate and average replanning time for Scenario 1 with 15 moving obstacles.]{
         \includegraphics[width=0.95\textwidth]{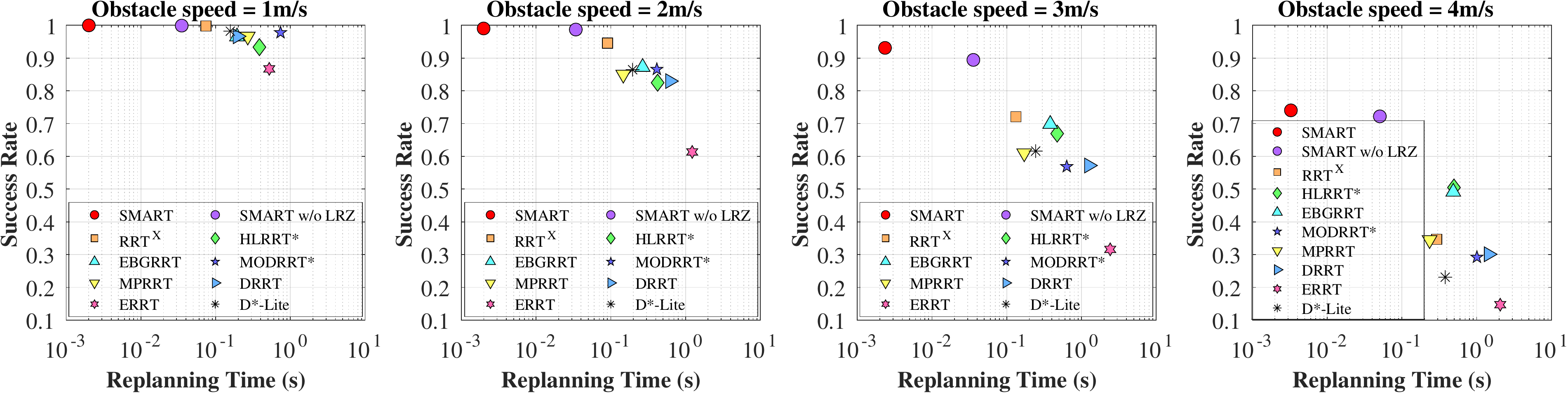}\label{fig:ReplanTime_successRate_part2}}\\
         
          \caption{Comparative evaluation results of success rate and average replanning time for Scenario 1 with a) 10 and b) 15 moving obstacles.}\label{fig:ReplanTime_successRate}
          \vspace{-15pt}
 \end{figure*}

\vspace{-10pt}
\begin{lem}
        The tree-repair complexity is $O(|\mathcal{N}^0|+h\overline{n}|\mathcal{S}^{\widetilde{\ell}}|)$.
    \label{lem:treerepairing}
\end{lem}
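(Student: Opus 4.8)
The plan is to decompose the tree-repair routine of Algorithm~\ref{alg:SMART} into its constituent operations --- subtree labeling and hot-spot identification (Section~\ref{search}), utility ranking (Section~\ref{ranking}), and tree-reconnection (Section~\ref{repair}) --- together with the outer loop that grows the local search region from $\ell=3$ up to $\widetilde{\ell}$, and to bound each piece separately so that their sum collapses to the two stated terms. I would partition the operations into those that are intrinsically \emph{global} (they follow parent pointers out of the current LSR and may traverse the whole tree) and those that are intrinsically \emph{local} (confined to the cells of $\mathcal{S}^{\widetilde{\ell}}$ and their neighbors). The first class will account for the $O(|\mathcal{N}^0|)$ term and the second for the $O(h\overline{n}|\mathcal{S}^{\widetilde{\ell}}|)$ term.

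For the global term I would analyze the subtree-labeling step. Each alive node in the LSR is backtracked toward its root, labeling ancestors until an already-labeled node or an unlabeled root is reached. The crucial amortization is that once a node receives a tree index it is never revisited, so the total number of node-visits over all backtracking operations --- even though individual ancestor chains may leave $\mathcal{S}^{\ell}$ and run arbitrarily deep into the tree --- is bounded by the number of alive nodes, which is at most $|\mathcal{N}^0|$. The same charging applies to the parent-change propagation in the reconnection step (Section~\ref{repair}e): re-rooting a connected subtree reverses pointers along an ancestor chain, and charging each reversal to the node whose parent changes keeps the aggregate cost at $O(|\mathcal{N}^0|)$.

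For the local term I would bound the hot-spot search and ranking. By Definition~\ref{define:eligible_cell}, deciding whether a cell is a hot-spot requires inspecting node-pairs drawn from the cell and its neighbors and testing at least one connecting edge for feasibility; since a cell holds $\overline{n}$ alive nodes on average and each node has $h$ neighbors, this costs $O(h\overline{n})$ per cell, giving $O(h\overline{n}|\mathcal{S}^{\widetilde{\ell}}|)$ over the final region. Computing the utility map (Definition~\ref{define:utility}) scans the alive nodes of each hot-spot to evaluate the $\min g(n_i)$ term, costing $O(\overline{n})$ per hot-spot and hence $O(\overline{n}|\mathcal{H}^{\widetilde{\ell}}|)\leq O(\overline{n}|\mathcal{S}^{\widetilde{\ell}}|)$, which is absorbed; the feasibility tests and local index/cost updates of the reconnection step are charged against the same node-pairs and stay within $O(h\overline{n}|\mathcal{S}^{\widetilde{\ell}}|)$.

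Finally I would handle the expansion loop. Because $\mathcal{S}^{3}\subset\mathcal{S}^{5}\subset\cdots\subset\mathcal{S}^{\widetilde{\ell}}$ are nested, the local processing can be made incremental, examining each cell only when it first enters the region; the cumulative local work therefore telescopes to $O(h\overline{n}|\mathcal{S}^{\widetilde{\ell}}|)$ rather than summing the sizes of all intermediate regions (a naive re-scan would cost an extra factor of $\widetilde{\ell}$), and the cumulative labeling remains $O(|\mathcal{N}^0|)$ by the same once-per-node argument. Adding the two classes yields the claimed $O(|\mathcal{N}^0|+h\overline{n}|\mathcal{S}^{\widetilde{\ell}}|)$. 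I expect the main obstacle to be precisely this amortization for the global operations: one must argue that the unbounded-length ancestor chains traversed during labeling and re-rooting do not compound across the repeated expansions, which hinges on the invariant that each node is labeled, and each parent pointer reversed, at most once, so that the whole-tree traversals are paid for only a single time in aggregate.
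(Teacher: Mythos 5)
Your proposal is correct and follows essentially the same decomposition as the paper's proof: labeling by backtracking bounded by $O(|\mathcal{N}^0|)$, hot-spot identification and reconnection at $O(h\overline{n})$ per cell over $|\mathcal{S}^{\widetilde{\ell}}|$ cells, with the utility computation absorbed. Your explicit amortization of the ancestor-chain traversals and your handling of the expansion loop via first-entry charging are slightly more careful justifications of the same bounds the paper asserts directly (the paper instead notes each cell's hot-spot status is checked at most twice), but they do not constitute a different route.
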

\begin{proof} 
\vspace{-9pt}
The first step in tree-repair is to label each node in $\mathcal{S}^{\ell}$ by backtracking, which has a worst-case complexity of $O(|\mathcal{N}^0\setminus\mathcal{N}^p|)\leq O(|\mathcal{N}^0|)$. The next step is to find the sufficient set of hot-spots for tree-repair. The hot-spot status of any cell $c\in\mathcal{S}^\ell$ is checked by comparing the subtree indices of each node in $c$ with up to $h$ neighboring nodes in $\mathcal{S}^\ell$ (Defn.~\ref{define:eligible_cell}-1). For $\overline{n}$ nodes per cell, this has a complexity of $O(h\overline{n})$ per cell. Then, the feasibility of the corresponding edges is checked (Defn.~\ref{define:eligible_cell}-2). This has a complexity of $O(h\overline{n})$ per cell. Thus, the complexity to check the hot-spot status of a cell is $O(2h\overline{n})=O(h\overline{n})$. During the expansion of $\mathcal{S}^\ell$, it becomes clear with little investigation  that the hot-spot status of a cell is checked a maximum of two times. Thus, the complexity to find the sufficient hot-spots is $O(2h\overline{n}|\mathcal{S}^{\widetilde{\ell}}|)=O(h\overline{n}|\mathcal{S}^{\widetilde{\ell}}|)$. Next, the complexity of computing utilities for  $|S^{\widetilde{\ell}}|$ cells (a cell can be a hot-spot up to two times) is $O(2|S^{\widetilde{\ell}}|)=O(|S^{\widetilde{\ell}}|)$. Finally, tree-repair is done via node reconnections. Similar to the hot-spot search, this step finds all neighboring nodes of different subtrees with feasible connecting edges. This has a complexity of $O(h\overline{n})$ per hot-spot.  For $|\mathcal{S}^{\widetilde{\ell}}|$ cells that could each be a potential hot-spot up to two times, this step has a complexity of $O(2h\overline{n}|\mathcal{S}^{\widetilde{\ell}}|)=O(h\overline{n}|\mathcal{S}^{\widetilde{\ell}}|)$. Thus, the overall complexity of tree-repair is $O(|\mathcal{N}^0|+(2h\overline{n}+1)|\mathcal{S}^{\widetilde{\ell}} |)=O(|\mathcal{N}^0|+h\overline{n}|\mathcal{S}^{\widetilde{\ell}}|)$. 
\end{proof}

\vspace{-9pt}
\begin{lem}
The tree-optimization complexity is $O(h|\mathcal{N}^0|)$.
    \label{lem:treeoptimize}
\end{lem}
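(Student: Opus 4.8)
The plan is to follow the same template as the proofs of Lemmas~\ref{lem:treepruning} and~\ref{lem:treerepairing}: isolate the single dominant operation of tree-optimization, bound the number of nodes it touches, bound the per-node work, and multiply. Here the operation is the rewiring cascade of~\cite{otte2016rrtx}, seeded from the set $\mathcal{N}_s$ of subtree nodes that were reconnected to $\mathcal{T}_0$ during tree-repair. I would therefore express the cost as (number of nodes whose cost-to-go is updated) $\times$ (work performed per update), and argue that these two factors are $O(|\mathcal{N}^0|)$ and $O(h)$, respectively.

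First I would describe the cascade as a priority-queue propagation: starting from $\mathcal{N}_s$, whenever a node's cost-to-go is reduced, the change is pushed to its neighbors, each of which is re-examined to see whether rerouting through the updated node lowers its own cost-to-go. The second step is to bound how many distinct nodes can be affected. Because a reconnection at a hot-spot can, in the worst case, shorten the cost-to-go of every node lying below it in the merged tree, the cascade may in principle reach all alive nodes, and once the pruned set $\mathcal{N}^p$ is eventually added back the relevant node population is bounded by $|\mathcal{N}^0|$; hence at most $O(|\mathcal{N}^0|)$ nodes are updated. The third step is to bound the work per updated node: re-examining a node consists of inspecting its neighbors (the nodes in the same and adjacent cells) and testing a rewiring for each, which by the definition of $h$ costs $O(h)$. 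Multiplying the two bounds yields the claimed $O(h|\mathcal{N}^0|)$.

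The main obstacle I anticipate is that, in a priority-queue cascade, a single node may be inserted and popped more than once, so a naive ``one pass over all nodes'' count is not immediately justified. To close this gap I would invoke the monotonicity property of the rewiring cascade established in~\cite{otte2016rrtx}: cost-to-go values only decrease during optimization, so each node settles to its final value after a bounded number of improving updates, and the total neighbor-inspection work charged to any node remains $O(h)$. This amortized accounting keeps the aggregate at $O(h|\mathcal{N}^0|)$ and matches the worst-case spirit of the earlier lemmas, with the LSR-localized factor $|\mathcal{S}^{\widetilde{\ell}}|$ of Lemma~\ref{lem:treerepairing} replaced here by the global $|\mathcal{N}^0|$, since optimization is not confined to $\mathcal{S}^{\widetilde{\ell}}$.
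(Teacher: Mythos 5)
Your proposal is correct and follows essentially the same route as the paper's proof: bound the rewiring cascade by the worst case in which it propagates through all $|\mathcal{N}^0|$ nodes, charge $O(h)$ neighbor inspections per node, and multiply. The extra amortized accounting you add for repeated priority-queue insertions only makes explicit a step the paper leaves implicit.
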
 
\begin{proof}\vspace{-12pt}
In the worst-case scenario, the subtree $\mathcal{T}_0$ initially contained only the goal node, and all disjoint subtrees $\mathcal{T}_1,...\mathcal{T}_{K-1}$ were reconnected to $\mathcal{T}^0$ during tree-repairing. Thus, the rewiring cascade must propagate through $|\mathcal{N}^0|-1$ nodes. Since there are $h$ neighbors per node, the overall complexity for tree-optimization is $O(h(|\mathcal{N}^0|-1))=O(h|\mathcal{N}^0|)$.
\end{proof}

\begin{thm}
    The complexity of SMART is $O(h|\mathcal{N}^0|)$.
    \label{thm:complexity}
\end{thm}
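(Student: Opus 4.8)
The plan is to obtain the cost of one replanning cycle of SMART by summing the complexities of its constituent phases---tree-pruning, tree-repair, tree-optimization, path search, and the final tree-reassembly (Algorithm~\ref{alg:SMART}, lines~10--23)---and then showing that this sum collapses to $O(h|\mathcal{N}^0|)$. The three established results, Lemmas~\ref{lem:treepruning}--\ref{lem:treeoptimize}, already supply the dominant pieces, so the remaining work is (i) to re-express those bounds in terms of $|\mathcal{N}^0|$ alone, and (ii) to bound the two phases not directly covered by the lemmas, namely path search and reassembly.

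First I would reduce each lemma's bound to a multiple of $|\mathcal{N}^0|$. Since $\mathcal{N}_{\mathcal{W}} \subseteq \mathcal{N}^0$, Lemma~\ref{lem:treepruning} gives tree-pruning cost $O(|\mathcal{N}_{\mathcal{W}}|) \leq O(|\mathcal{N}^0|)$. For tree-repair, the key observation is that the $\overline{n}|\mathcal{S}^{\widetilde{\ell}}|$ nodes lying inside the final local search region form a subset of the tree's nodes, so $\overline{n}|\mathcal{S}^{\widetilde{\ell}}| \leq |\mathcal{N}^0|$ and hence the repair term obeys $h\overline{n}|\mathcal{S}^{\widetilde{\ell}}| \leq h|\mathcal{N}^0|$; combined with the labeling term, Lemma~\ref{lem:treerepairing} yields $O(|\mathcal{N}^0| + h|\mathcal{N}^0|) = O(h|\mathcal{N}^0|)$ because $h \geq 1$. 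Tree-optimization is already $O(h|\mathcal{N}^0|)$ by Lemma~\ref{lem:treeoptimize}.

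Next I would bound the two remaining phases. Path search (line~22) backtracks a single branch of $\mathcal{T}_0$ from $x_{\mathcal{R}}$ to the goal, visiting at most $|\mathcal{N}^0|$ nodes, i.e., $O(|\mathcal{N}^0|)$. The reassembly step (line~23) adds the pruned nodes $\mathcal{N}^p$ and the roots of the residual disjoint subtrees back to $\mathcal{T}_0$; each such reconnection inspects at most $h$ neighboring nodes, so this costs $O(h|\mathcal{N}^p|) \leq O(h|\mathcal{N}^0|)$ since $\mathcal{N}^p \subseteq \mathcal{N}^0$.

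Finally, summing the five contributions gives $O(|\mathcal{N}^0|) + O(h|\mathcal{N}^0|) + O(h|\mathcal{N}^0|) + O(|\mathcal{N}^0|) + O(h|\mathcal{N}^0|) = O(h|\mathcal{N}^0|)$, the $h|\mathcal{N}^0|$ terms dominating since $h \geq 1$. I expect the main obstacle to be the two phases not handled by the lemmas: the argument that $\overline{n}|\mathcal{S}^{\widetilde{\ell}}| \leq |\mathcal{N}^0|$ (the local-search-region nodes are a subset of all nodes, so their count cannot exceed the total) must be stated cleanly, and one must verify that neither path search nor reassembly introduces a term larger than $O(h|\mathcal{N}^0|)$ rather than being absorbed into it.
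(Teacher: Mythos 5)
Your proposal is correct and follows essentially the same route as the paper: sum the bounds from Lemmas~\ref{lem:treepruning}--\ref{lem:treeoptimize} and collapse them using $|\mathcal{N}_{\mathcal{W}}|\leq|\mathcal{N}^0|$ and $\overline{n}|\mathcal{S}^{\widetilde{\ell}}|\leq|\mathcal{N}^0|$. The only difference is that you additionally bound the path-search and tree-reassembly steps, which the paper's proof silently omits; these are absorbed into $O(h|\mathcal{N}^0|)$ exactly as you argue.
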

\begin{proof}\vspace{-6pt}
     From Lemmas~\ref{lem:treepruning}-\ref{lem:treeoptimize}, the complexity is $O(|\mathcal{N}_\mathcal{W}|+h\overline{n}|\mathcal{S}^{\widetilde{\ell}}|+(h+1)|\mathcal{N}^0|)$.  
     Since $|\mathcal{N}_\mathcal{W}|\leq |\mathcal{N}^0|$ 
     and $\overline{n}|\mathcal{S}^{\widetilde{\ell}}| \leq |\mathcal{N}^0|$,  the complexity reduces to $O(h|\mathcal{N}^0|)$. 
\end{proof}

\vspace{-6pt}
\begin{thm} SMART informed replanner is complete with respect to the sample-based representation of the environment. 
\end{thm}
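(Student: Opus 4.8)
The plan is to formalize ``the sample-based representation'' as the finite undirected graph $G_u$ whose vertices are the alive nodes $\mathcal{N}^a$ together with the current cobot position $x_{\mathcal{R}}(t_u)$, and whose edges are exactly the collision-free segments (feasible with respect to $\textup{CPR}(t_u)$ and the static obstacles) between vertices lying in the same or adjacent cells of the tiling $\mathcal{C}$. Completeness with respect to this representation then means: whenever $G_u$ contains a path from $x_{\mathcal{R}}(t_u)$ to the goal $x_g$, the informed replanner terminates and returns a feasible trajectory using only these samples. My strategy is to show that the LSR-based hot-spot search examines \emph{every} feasible inter-subtree edge of $G_u$, and that the tree-reconnection step merges subtrees monotonically, so that the connectivity discovered by the replanner coincides with the connectivity of $G_u$.

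First I would establish finiteness and exhaustiveness. Since $\mathcal{X}$ is bounded and $|\mathcal{C}|<\infty$, the set of alive nodes and the number $K$ of disjoint subtrees are finite, and the LSR expansion $\ell = 3,5,7,\dots$ is strictly increasing and bounded by $\ell_{max}$; I would take $\ell_{max}$ large enough that $\mathcal{S}^{\ell_{max}}$ covers every cell of $\mathcal{C}$ containing an alive node reachable from $x_{\mathcal{R}}(t_u)$ in $G_u$. By Definition~\ref{define:eligible_cell}, any cell holding alive nodes of two distinct subtrees (or whose neighbor does) with a feasible connecting edge is flagged as a hot-spot; hence as $\mathcal{S}^{\ell}$ grows to cover $\mathcal{C}$, every feasible edge of $G_u$ joining two distinct subtrees is eventually realized at some examined hot-spot.

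Next I would argue monotone merging and connectivity recovery. Each successful reconnection (step~e of Section~\ref{repair}) fuses two subtrees into one, so the partition of $\mathcal{N}^a$ into subtrees is coarsened monotonically and $K$ strictly decreases; since the process only adds edges, connectivity is never destroyed. If $G_u$ contains a path from $x_{\mathcal{R}}(t_u)$ to $x_g$, that path crosses between subtrees only along feasible inter-subtree edges, each of which is examined by the previous step; therefore all subtrees traversed by the path, including $\mathcal{T}_0$, are eventually merged, the reachability test in step~g succeeds, and \textbf{PathSearch} returns a trajectory on $\mathcal{T}_0$. Conversely, if the informed repair exhausts all hot-spots over the finite sample set without connecting $\mathcal{T}_0$ to $x_{\mathcal{R}}(t_u)$, then by exhaustiveness no feasible inter-subtree chain links them, i.e.\ $G_u$ has no such path, contradicting the hypothesis; this is also precisely the regime in which the Remark invokes random sampling for probabilistic (as opposed to sample-based) completeness.

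The main obstacle I anticipate is the bridging step between global $G_u$-connectivity and the purely local, incremental merging driven by the growing LSR: I must show that examining every feasible inter-subtree edge together with transitive subtree fusion reproduces exactly the connected components of $G_u$, so that a path winding through many subtrees is recovered even though the algorithm never reasons about the path globally. A secondary but necessary technical point is justifying the choice of $\ell_{max}$ so that $\mathcal{S}^{\ell_{max}}$ covers the connected component of $G_u$ containing both $x_{\mathcal{R}}(t_u)$ and $\mathcal{T}_0$; without this, exhaustiveness of the hot-spot search cannot be guaranteed and completeness would degrade to the component actually swept by the LSR.
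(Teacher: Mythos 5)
Your proposal is correct and follows essentially the same route as the paper's (much terser) proof: the paper likewise argues that hot-spots are necessary and sufficient for reconnection, that the incremental LSR expansion finds all of them, and that sequential merging of adjacent subtrees eventually connects the cobot to $\mathcal{T}_0$. Your explicit graph $G_u$ (edges only between same-or-adjacent-cell node pairs) and your remark that $\ell_{max}$ must be taken large enough to cover the relevant connected component simply make precise what the paper leaves implicit, including the boundary with the Remark's fallback to random sampling.
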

\begin{proof} 
If a path exists using alive nodes $\mathcal{N}^a$, then by Defn.~\ref{define:eligible_cell} the hot-spot nodes are necessary and sufficient for replanning. While the incremental hot-spot search guarantees that all hot-spots will be found, the tree-repair sequentially merges and relabels all adjacent subtrees at hot-spots. This guarantees that the cobot will eventually connect to $\mathcal{T}_0$ and a path is found.  \end{proof} 

\begin{figure}[t]
    \centering
    \subfloat[Travel time for Scenario 1 with 10 moving obstacles.]{
         \includegraphics[width=0.43\textwidth]{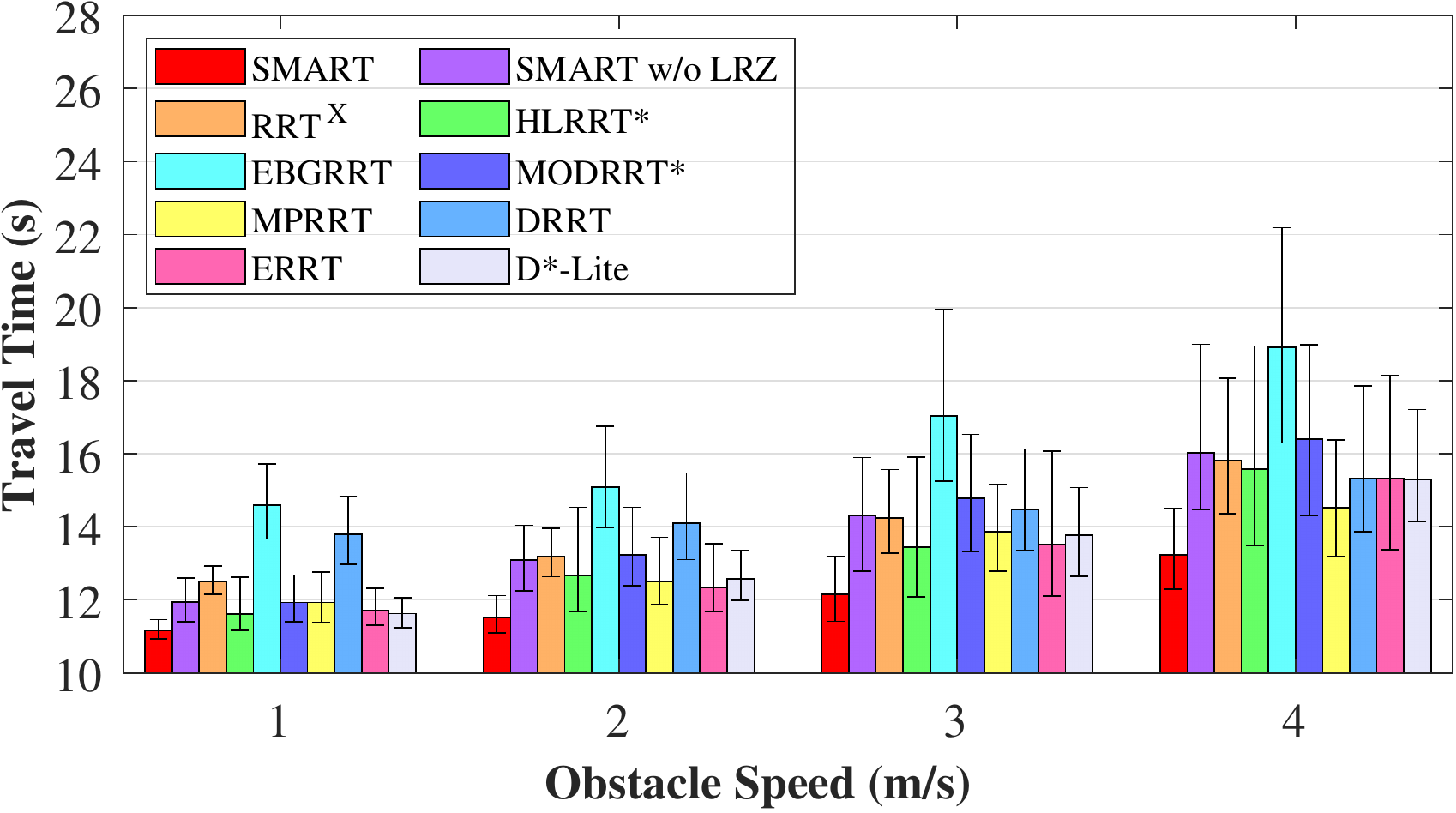}}\quad \\
     \centering
    \subfloat[Travel time for Scenario 1 with 15 moving obstacles.]{
         \includegraphics[width=0.43\textwidth]{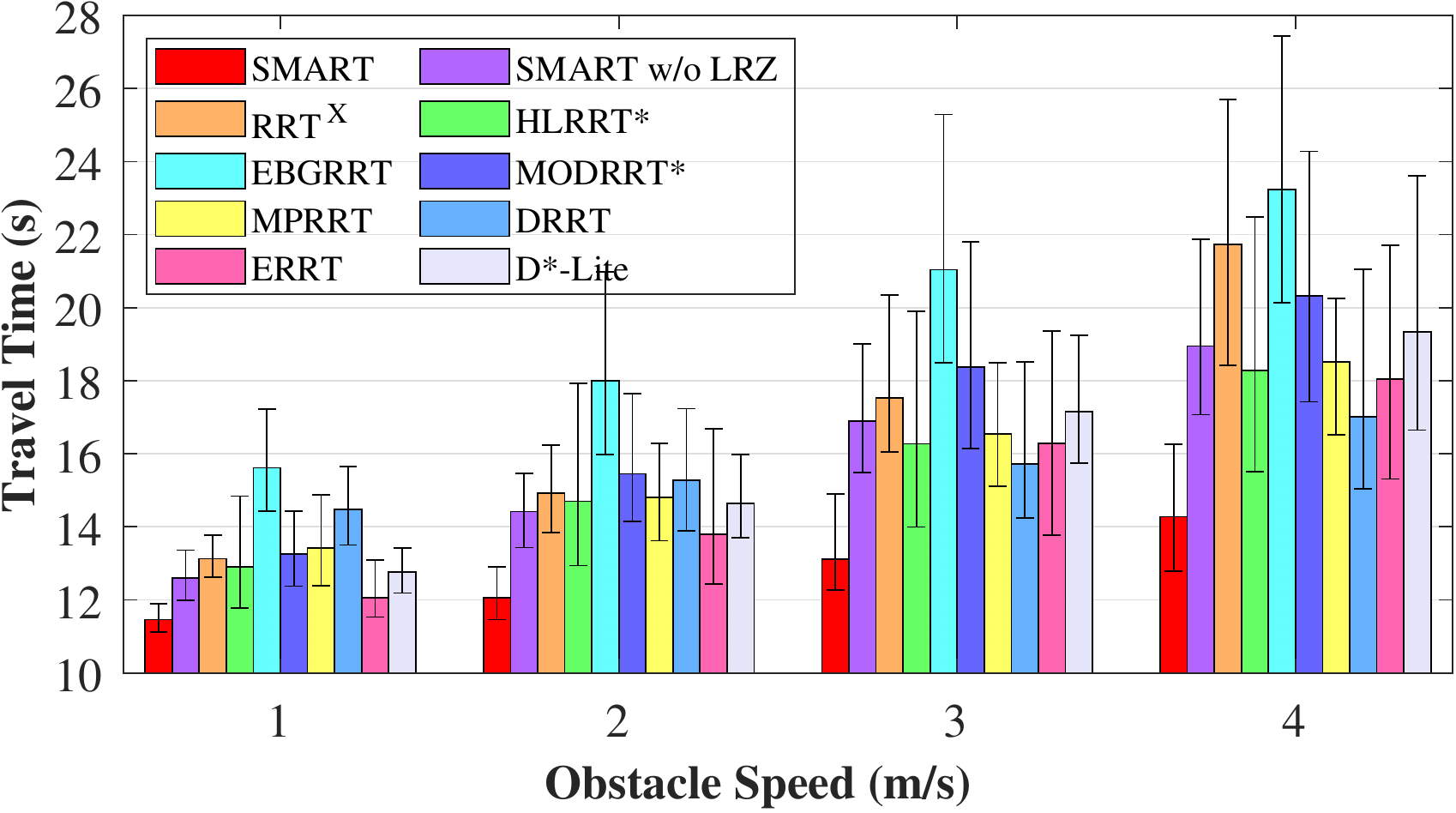}}\\
                  \caption{ Comparative evaluation results of travel time of successful trials for Scenario 1 with a) 10 and b) 15 moving obstacles. The plots show the median and the $25^{th}$ and $75^{th}$ percentile values.} 
          \label{fig:travelTime}
          \vspace{0pt}
 \end{figure}

\vspace{-15pt}
\section{Results and Discussion}
\label{sec:results}
This section presents the testing and validation results of the SMART algorithm via: 1) simulation studies on scenarios containing a) dynamic obstacles and b) both static and dynamic obstacles, and 2) real-experiments in a laboratory.

\vspace{-6pt}
\subsection{Validation by Simulation Experiments}

The performance of SMART is comparatively evaluated with existing methods (Table~\ref{tab:feature}) by extensive simulations. 

\subsubsection{Simulation Set-Up}
SMART is implemented on a holonomic cobot of radius $0.5m$ that moves at a speed of $v_{\mathcal{R}}=4m/s$. For simplicity, the cobot is treated as a point and its radius is added to the static and dynamic obstacles for collision checking. The dynamic obstacles of radius $0.5m$ move along a random heading from $[0,2\pi]$ for a random distance from $[0,10m]$, after which a new heading and distance are selected. Two different scenarios are considered:

$\bullet$ \textit{Scenario 1 with Dynamic Obstacles:} This scenario consists of a $32m \times 32m$ space populated with only dynamic obstacles (Fig. \ref{fig:openspace}). Two cases are conducted including a) $10$ and b) $15$ obstacles. Each obstacle moves at the same speed selected from the set $\{1,2,3,4\}m/s$, resulting in $8$ different combinations of the number of obstacles and speeds. For each combination, $30$ different obstacle trajectories are generated to intersect the cobot, resulting in a total of $240$ case studies.

$\bullet$ \textit{Scenario 2 with Static and Dynamic Obstacles:} 
This scenario depicts a real situation (e.g., a factory) with both static and dynamic obstacles (Fig.~\ref{fig:factory}). It consisted of a $66m \times 38m$ space with a static obstacle layout and $10$ dynamic obstacles. Each obstacle moves at a different speed selected from the set $\{1,2,3,4\}m/s$. Then, $30$ different obstacle trajectories are generated to intersect the cobot, resulting in $30$ case studies.

For each scenario above, $100$ trials are performed for each of the aforementioned cases for each algorithm. All algorithms were deployed in C++ and run on a computer with 2.60 GHz processor and 32 GB RAM. For the same trial, a fixed random seed is used for sample generation for all algorithms, and their initial search trees are of the same size. A trial is marked as failed and the travel time is not recorded if the cobot collides with any obstacle. For SMART, the tiling is generated with cell size of $1m \times 1m$. Based on simulation studies, the reaction time-horizon is set as $T_{RH} = 0.8s$ and an obstacle risk time-horizon of $T_{OH}=0.4s$ is added to all dynamic obstacles for all algorithms. If the cobot moves into an OHZ, then the OHZ is ignored but the actual obstacle is considered for collision checking. The goal bias and random sample rates shared by most algorithms (except RRT$^\text{X}$ and MODRRT*) are set based on~\cite{chen2019horizon} as 0.1 and 0.2 respectively. All other algorithm-specific parameters are set based on the corresponding papers.

To consider sensor uncertainties, noise was injected into the range, the heading angle, and the position of cobot during simulation. A typical lidar (e.g., RPLIDAR S2L\cite{robot}) provides an accuracy of $0.03m$, and a modestly priced compass can provide an accuracy of 1$^\circ$\cite{paull2013auv}. Based on these, the uncertainties were simulated as uniform distributions $U_{[-0.03m,0.03m]}$ and $U_{[-1^\circ,1^\circ]}$ for range and heading, respectively. Similarly, an indoor localization system (e.g., Hagisonic StarGazer) provides a precision of $0.02m$\cite{lopez2013evaluating}. Thus, the localization uncertainty is simulated as a uniform distribution $U_{[-0.02m,0.02m]}$.

\subsubsection{Performance Metrics} 
The following metrics are considered for comparative performance evaluation:
\begin{itemize}
\item Replanning time: Time to replan a new path. 

\item Success rate: Fraction of successful runs out of the total.

\item Travel time: Time from start to goal without collision. 
\end{itemize}

\subsubsection{Simulation Results}\label{subsec:mc_sim_res}

Fig.~\ref{fig:ReplanTime_successRate} shows the comparative evaluation results on Scenario 1. Overall, SMART achieves significant improvements over other algorithms in success rate and replanning time in all case studies. This follows from the facts that i) tree-pruning not only reduces collision checking to nearby obstacles but also produces less number of disjoint trees for repairing, and ii) tree-repair exploits the disjoint subtrees and facilitates repairing at hot-spots for speedy recovery. Fig.~\ref{fig:travelTime} shows that SMART achieves the lowest travel times because of i) lowest replanning time and ii) infrequent replanning. Furthermore, to investigate the value of the tree-repair step, we present an ablation study, where LRZ is removed, thus pruning all risky nodes. Fig.~\ref{fig:ReplanTime_successRate} and Fig.~\ref{fig:travelTime} show that SMART w/o LRZ still performs significantly better than all other algorithms in replanning time and success rate.

$\bullet$ \textit{Effect of Obstacle Speed:} As seen in Fig.~\ref{fig:ReplanTime_successRate}, while the replanning time of SMART is minimally affected by obstacle speed, the success rate dips when obstacle speed is greater than $2m/s$. This is because high-speed moving obstacles have higher chance to hit the cobot. Moreover, as shown in Fig.~\ref{fig:travelTime}, the travel time goes up with the obstacle speed because high-speed obstacles cause frequent replannings.

\begin{figure}[t]
        \centering
         \includegraphics[width=0.450\textwidth]{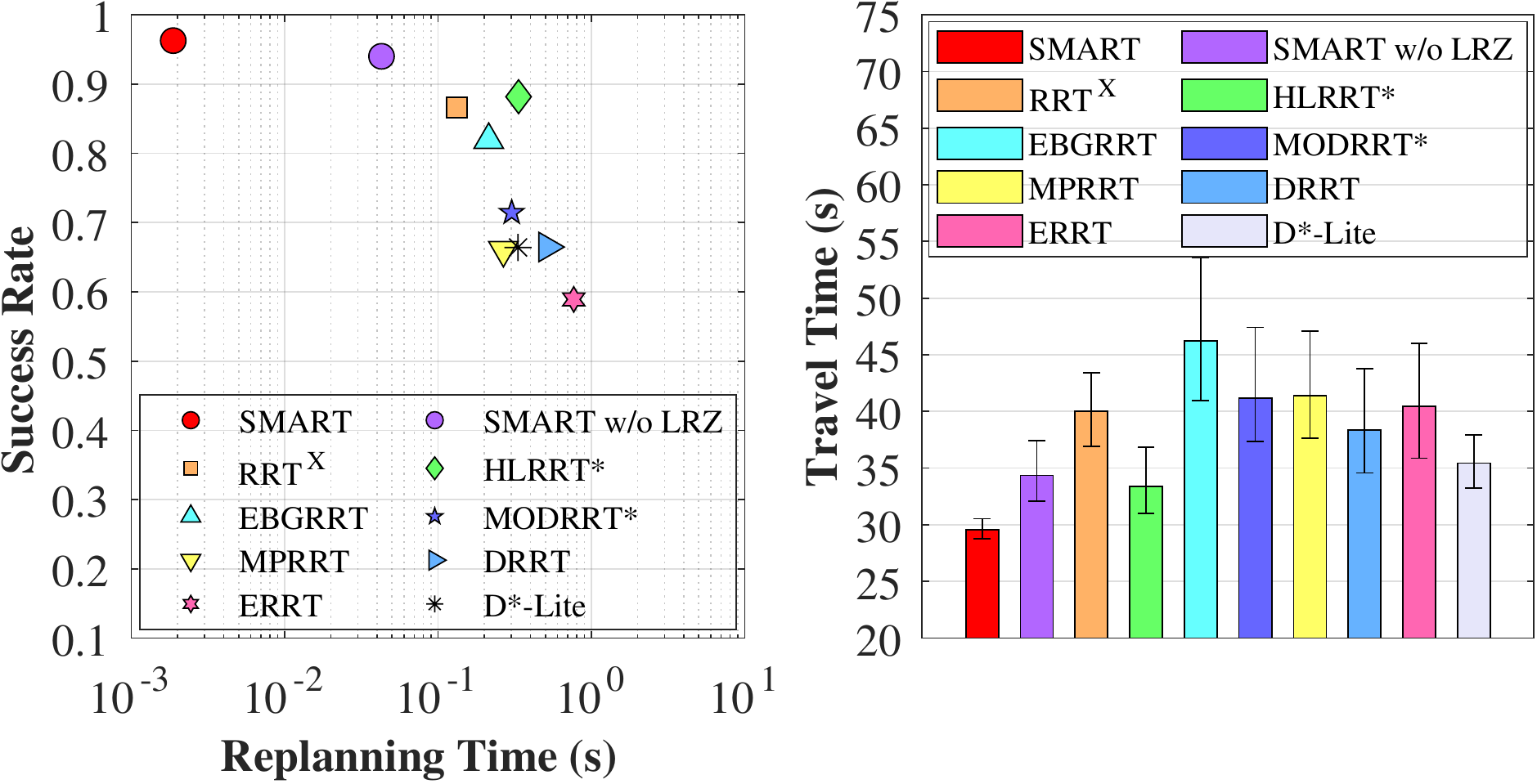}
    \caption{Comparative evaluation results for Scenario 2. }\label{fig:factory_result}\vspace{-0pt}      
 \end{figure}

$\bullet$ \textit{Effect of Number of Obstacles:} As seen in Fig.~\ref{fig:ReplanTime_successRate}, the replanning time increases slightly with an increase in the number of obstacles because of more i) disjoint subtrees for repairing and ii) complex environment with smaller free space. The success rate dips with obstacle number for high obstacle speed because longer replanning time and crowded environment increase collision probability. Similarly, as shown in Fig.~\ref{fig:travelTime}, the travel time goes up with obstacle number because of frequent replannings and complex environment.

\begin{figure*}[t]

    \centering
    \subfloat[Initial path is blocked. Successfully replanned in real time.]{
        \includegraphics[width=0.44\textwidth]{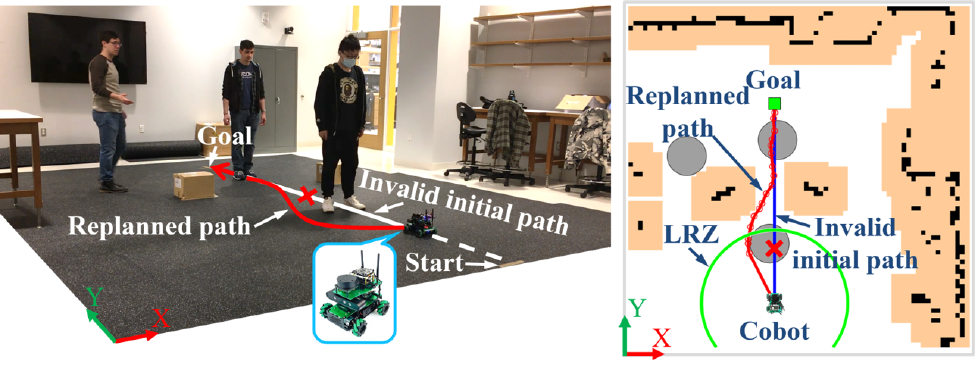}\label{fig:experiment_result_part1}}\hspace{-3pt}\quad
    \centering
    \subfloat[Current path is blocked again. Successfully replanned in real time.]{
         \includegraphics[width=0.44\textwidth]{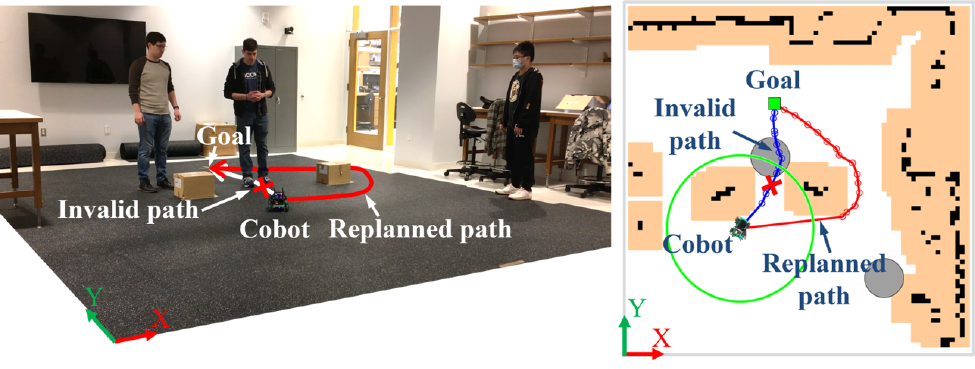}\label{fig:experiment_result_part2}}\hspace{-3pt}\\

    \centering
    \subfloat[Current path is blocked again. Successfully replanned in real time.]{
        \includegraphics[width=0.44\textwidth]{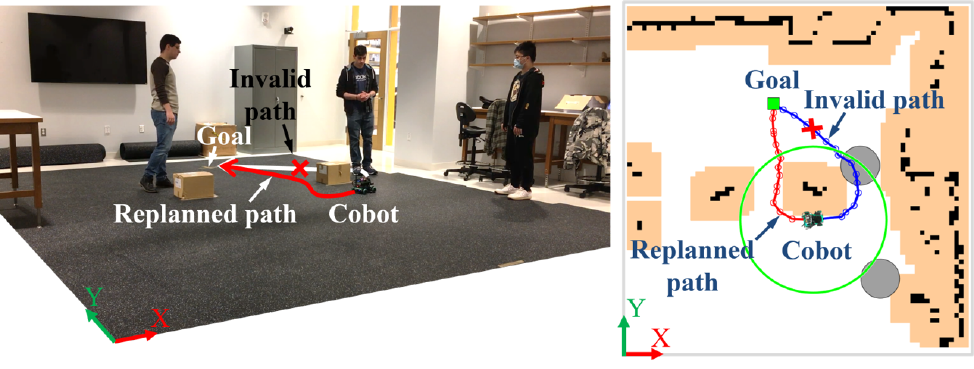}\label{fig:experiment_result_part3}}\hspace{-3pt}\quad
    \centering
    \subfloat[Current path is blocked again. Successfully replanned in real time.]{
         \includegraphics[width=0.44\textwidth]{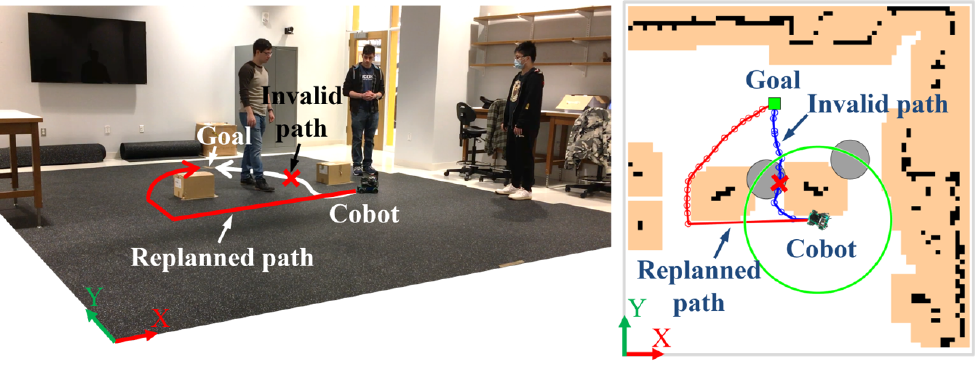}\label{fig:experiment_result_part4}}\hspace{-3pt}\\

    \centering
    \subfloat[Current path is blocked again. Successfully replanned in real time.]{
        \includegraphics[width=0.44\textwidth]{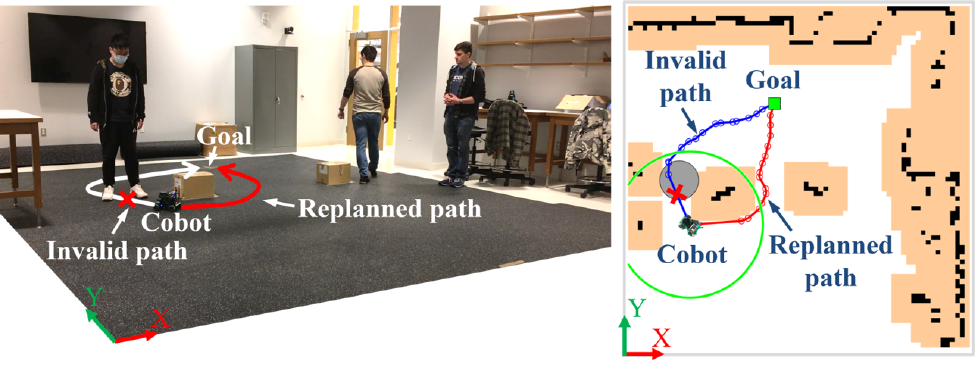}\label{fig:experiment_result_part5}}\hspace{-3pt}\quad
    \centering
    \subfloat[Cobot reaches the goal.]{
         \includegraphics[width=0.44\textwidth]{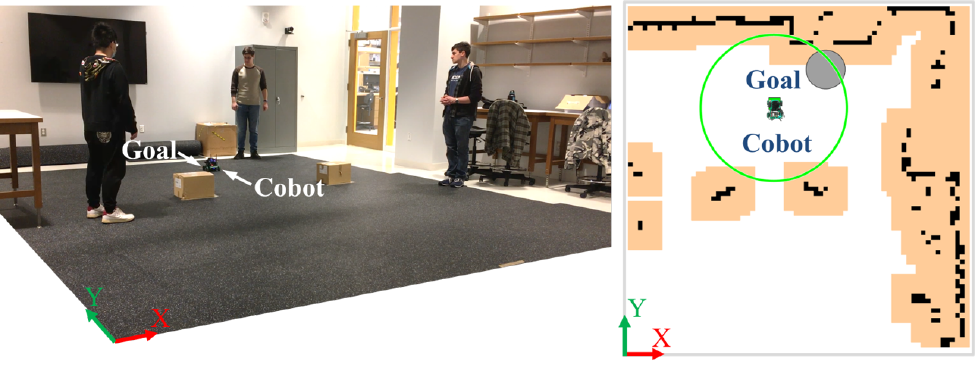}\label{fig:experiment_result_part6}}\hspace{-3pt}\\
         
    \caption{Snapshots of a real experiment in a laboratory with both static and dynamic obstacles. Video available in supplementary documents. }\label{fig:experiment_result}
      \vspace{-6pt}    
 \end{figure*}

Fig.~\ref{fig:factory_result} shows the same trend in Scenario 2. As seen, SMART outperforms all other methods in terms of replanning time, success rate, and the total travel time.

\vspace{-6pt}
\subsection{Validation by Real Experiments}
The SMART algorithm is further validated by real experiments in a $7m \times 7m$ lab space with both static and dynamic obstacles. A cobot called ROSMASTER X3~\cite{robot} is used that is equipped with 1) a RPLIDAR S2L lidar~\cite{robot} with a range of $8m$ for obstacle detection, 2) MD520 motor with encoder~\cite{robot} for detection of rotation angle and linear displacement, and 3) MPU9250 IMU~\cite{robot} for detection of speed, acceleration, and orientation. An Extended Kalman Filter \cite{bar2004estimation} is used to fuse data from the IMU and motor encoder for localization. The space is tiled with $0.1m \times 0.1m$ cells. The occupancy grid mapping algorithm~\cite{thrun2005} is used offline to create a static obstacle map, while the humans are detected in real-time.  
The cobot carries Jetson Nano minicomputer that collects sensor measurements and runs the SMART algorithm for real-time replanning, control and navigation. 
Fig.~\ref{fig:experiment_result} shows the various snapshots of an experiment, where the cobot successfully replans a new path multiple times to avoid obstacles until reaching the goal, thus revealing the effectiveness of SMART. The observed replanning time in experiments is $\sim0.03s$.

\vspace{-6pt}
\section{Conclusions and Future Work}
\label{sec:conclusions}
The paper presents an algorithm, called SMART, for adaptive replanning in dynamic environments. To replan a path, SMART performs risk-based tree-pruning to form multiple disjoint subtrees, then exploits and repairs them at selected hot-spots for speed recovery. It is shown that SMART is computationally efficient and complete. The comparative evaluation with existing algorithms shows that SMART significantly improves the replanning time, success rate, and travel time. Finally, SMART is validated by real experiments. 

With further research SMART has the potential for extending to non-holonomic robots and higher dimensional problems. The challenges include 1) defining configuration space including the rotational space components, 2) identifying efficient strategies for partitioning and sampling in the configuration space, and 3) considering the motion constraints in cost functions and tree reconnections. Further, SMART could be extended to problems with 1) joint time-risk optimization,  2) multi-cobot systems, and 3) coverage path planning~\cite{shen2021ct, GRP09}.

\balance
\bibliographystyle{IEEEtran}
\bibliography{reference}

\begin{thebibliography}{10}
\providecommand{\url}[1]{#1}
\csname url@samestyle\endcsname
\providecommand{\newblock}{\relax}
\providecommand{\bibinfo}[2]{#2}
\providecommand{\BIBentrySTDinterwordspacing}{\spaceskip=0pt\relax}
\providecommand{\BIBentryALTinterwordstretchfactor}{4}
\providecommand{\BIBentryALTinterwordspacing}{\spaceskip=\fontdimen2\font plus
\BIBentryALTinterwordstretchfactor\fontdimen3\font minus
  \fontdimen4\font\relax}
\providecommand{\BIBforeignlanguage}[2]{{%
\expandafter\ifx\csname l@#1\endcsname\relax
\typeout{** WARNING: IEEEtran.bst: No hyphenation pattern has been}%
\typeout{** loaded for the language `#1'. Using the pattern for}%
\typeout{** the default language instead.}%
\else
\language=\csname l@#1\endcsname
\fi
#2}}
\providecommand{\BIBdecl}{\relax}
\BIBdecl

\bibitem{Songgupta2019}
J.~Song, S.~Gupta, and T.~A. Wettergren, ``T$^\star$: Time-optimal risk-aware
  motion planning for curvature-constrained vehicles,'' \emph{IEEE Robot.
  Automat. Lett.}, vol.~4, no.~1, pp. 33--40, 2019.

\bibitem{song2018}
J.~Song and S.~Gupta, ``$\epsilon^\star$: An online coverage path planning
  algorithm,'' \emph{IEEE Trans. Robot.}, vol.~34, pp. 526--533, 2018.

\bibitem{wang2020eb}
J.~Wang, M.~Q.-H. Meng, and O.~Khatib, ``\uppercase{EB-RRT}: Optimal motion
  planning for mobile robots,'' \emph{IEEE Trans. Autom. Sci. Eng.}, vol.~17,
  no.~4, pp. 2063--2073, 2020.

\bibitem{hare2020pose}
J.~Z. Hare, J.~Song, S.~Gupta, and T.~A. Wettergren, ``\uppercase{POSE.R}:
  Prediction-based opportunistic sensing for resilient and efficient sensor
  networks,'' \emph{ACM Trans. Sens. Netw.}, vol.~17, no.~1, pp. 1--41, 2020.

\bibitem{karaman2011sampling}
S.~Karaman and E.~Frazzoli, ``Sampling-based algorithms for optimal motion
  planning,'' \emph{Int. J. Robot. Res.}, vol.~30, no.~7, pp. 846--894, 2011.

\bibitem{bruce2002real}
J.~Bruce and M.~Veloso, ``Real-time randomized path planning for robot
  navigation,'' in \emph{IEEE Int. Conf. Intell. Robots Syst.}, vol.~3, 2002,
  pp. 2383--2388.

\bibitem{ferguson2006replanning}
D.~Ferguson, N.~Kalra, and A.~Stentz, ``Replanning with \uppercase{RRT}s,'' in
  \emph{IEEE Int. Conf. Robot. Automat.}, 2006, pp. 1243--1248.

\bibitem{zucker2007multipartite}
M.~Zucker, J.~Kuffner, and M.~Branicky, ``Multipartite \uppercase{RRT}s for
  rapid replanning in dynamic environments,'' in \emph{IEEE Int. Conf. Robot.
  Automat.}, 2007, pp. 1603--1609.

\bibitem{otte2016rrtx}
M.~Otte and E.~Frazzoli, ``\uppercase{RRT$^\text{X}$}: Asymptotically optimal
  single-query sampling-based motion planning with quick replanning,''
  \emph{Int. J. Robot. Res.}, vol.~35, no.~7, pp. 797--822, 2016.

\bibitem{chen2019horizon}
Y.~Chen, Z.~He, and S.~Li, ``Horizon-based lazy optimal \uppercase{RRT} for
  fast, efficient replanning in dynamic environment,'' \emph{Auton. Robots},
  vol.~43, no.~8, pp. 2271--2292, 2019.

\bibitem{yuan2020efficient}
C.~Yuan, G.~Liu, W.~Zhang, and X.~Pan, ``An efficient \uppercase{RRT} cache
  method in dynamic environments for path planning,'' \emph{Robot. Auton.
  Syst.}, vol. 131, p. 103595, 2020.

\bibitem{qi2020}
J.~Qi, H.~Yang, and H.~Sun, ``\uppercase{MOD-RRT*}: A sampling-based algorithm
  for robot path planning in dynamic environment,'' \emph{IEEE Trans. Ind.
  Electron.}, vol.~68, no.~8, pp. 7244--7251, Aug. 2021.

\bibitem{leven2002framework}
P.~Leven and S.~Hutchinson, ``A framework for real-time path planning in
  changing environments,'' \emph{Int. J. Robot. Res.}, vol.~21, no.~12, pp.
  999--1030, 2002.

\bibitem{koenig2002d}
S.~Koenig and M.~Likhachev, ``D* {Lite},'' in \emph{Eighteenth National
  Conference on Artificial Intelligence}.\hskip 1em plus 0.5em minus
  0.4em\relax USA: American Association for Artificial Intelligence, 2002, p.
  476–483.

\bibitem{koenig2004lifelong}
S.~Koenig, M.~Likhachev, and D.~Furcy, ``Lifelong \uppercase{P}lanning
  \uppercase{A}*,'' \emph{Artificial Intelligence}, vol. 155, no. 1-2, pp.
  93--146, 2004.

\bibitem{zucker2013chomp}
M.~Zucker, N.~Ratliff, A.~D. Dragan, M.~Pivtoraiko, M.~Klingensmith, C.~M.
  Dellin, J.~A. Bagnell, and S.~S. Srinivasa, ``\uppercase{CHOMP}: Covariant
  hamiltonian optimization for motion planning,'' \emph{Int. J. Robot. Res.},
  vol.~32, no. 9-10, pp. 1164--1193, 2013.

\bibitem{hsu2002randomized}
D.~Hsu, R.~Kindel, J.-C. Latombe, and S.~Rock, ``Randomized kinodynamic motion
  planning with moving obstacles,'' \emph{Int. J. Robot. Res.}, vol.~21, no.~3,
  pp. 233--255, 2002.

\bibitem{hauser2012responsiveness}
K.~Hauser, ``On responsiveness, safety, and completeness in real-time motion
  planning,'' \emph{Auton. Robots}, vol.~32, no.~1, pp. 35--48, 2012.

\bibitem{Fraichard2004inevitable}
T.~Fraichard and H.~Asama, ``Inevitable collision states—a step towards safer
  robots?'' \emph{Advanced Robotics}, vol.~18, no.~10, pp. 1001--1024, 2004.

\bibitem{snape2011hybrid}
J.~Snape, J.~Van Den~Berg, S.~J. Guy, and D.~Manocha, ``The hybrid reciprocal
  velocity obstacle,'' \emph{IEEE Trans. Robot.}, vol.~27, no.~4, pp. 696--706,
  2011.

\bibitem{zhu2022collision}
K.~Zhu, B.~Li, W.~Zhe, and T.~Zhang, ``Collision avoidance among dense
  heterogeneous agents using deep reinforcement learning,'' \emph{IEEE Robot.
  Automat. Lett.}, vol.~8, no.~1, pp. 57--64, 2022.

\bibitem{fan2020distributed}
T.~Fan, P.~Long, W.~Liu, and J.~Pan, ``Distributed multi-robot collision
  avoidance via deep reinforcement learning for navigation in complex
  scenarios,'' \emph{Int. J. Robot. Res.}, vol.~39, no.~7, pp. 856--892, 2020.

\bibitem{robot}
\BIBentryALTinterwordspacing
 [Online]. Available: \url{https://category.yahboom.net/products}
\BIBentrySTDinterwordspacing

\bibitem{paull2013auv}
L.~Paull, S.~Saeedi, M.~Seto, and H.~Li, ``{AUV} navigation and localization: A
  review,'' \emph{IEEE J. Ocean. Eng.}, vol.~39, no.~1, pp. 131--149, 2013.

\bibitem{lopez2013evaluating}
J.~L. Fern{\'a}ndez, C.~Watkins, D.~P. Losada, and M.~D. Medina, ``Evaluating
  different landmark positioning systems within the ride architecture,''
  \emph{J. Phys. Agents,}, vol.~7, no.~1, pp. 3--11, 2013.

\bibitem{bar2004estimation}
Y.~Bar-Shalom, X.~R. Li, and T.~Kirubarajan, \emph{Estimation with applications
  to tracking and navigation: theory algorithms and software}.\hskip 1em plus
  0.5em minus 0.4em\relax John Wiley \& Sons, 2004.

\bibitem{thrun2005}
S.~Thrun, W.~Burgard, and D.~Fox, \emph{Probabilistic robotics}.\hskip 1em plus
  0.5em minus 0.4em\relax MIT press, 2005.

\bibitem{shen2021ct}
Z.~Shen, J.~Song, K.~Mittal, and S.~Gupta, ``\uppercase{CT-CPP}: Coverage path
  planning for 3\uppercase{D} terrain reconstruction using dynamic coverage
  trees,'' \emph{IEEE Robot. Automat. Lett.}, vol.~7, no.~1, pp. 135--142, Jan.
  2022.

\bibitem{GRP09}
S.~Gupta, A.~Ray, and S.~Phoha, ``Generalized ising model for dynamic
  adaptation in autonomous systems,'' \emph{Euro. Phys. Lett.}, vol.~87, p.
  10009, 2009.

\end{thebibliography}

\end{document}